\newtheorem{assumption}{Assumption}
\begin{document}

\title{Learning Linear Non-Gaussian Causal Models in the Presence of Latent Variables}

\author{\name Saber Salehkaleybar \email saleh@sharif.edu \\
       \addr Department of Electrical Engineering\\
       Sharif University of Technology, Tehran, Iran
       \AND
       \name AmirEmad Ghassami \email ghassam2@illinois.edu \\
       \addr Department of ECE\\ University of Illinois at Urbana-Champaign\\
       Urbana, IL 61801     
       \AND
       \name Negar Kiyavash \email negar.kiyavash@gatech.edu \\
       \addr Departments of ECE and ISE\\ 
       Georgia Institute of Technology\\
       Atlanta, GA 30308
       \AND
       \name Kun Zhang \email kunz1@cmu.edu \\
       \addr Department of Philosophy\\ Carnegie Mellon University\\
       Pittsburgh, PA 15213
   }

\editor{ }

\maketitle

\begin{abstract}
We consider the problem of learning causal models from observational data generated by linear non-Gaussian acyclic causal models with latent variables. Without considering the effect of latent variables, one usually infers wrong causal relationships among the observed variables. 
Under faithfulness assumption, we propose a method to check whether there exists a causal path between any two observed variables. From this information, we can obtain the causal order among them.
The next question is then whether or not the causal effects can be uniquely identified as well.
It can be shown that causal effects among observed variables cannot be identified uniquely even under the assumptions of faithfulness and non-Gaussianity of exogenous noises. However, we will propose an efficient method to identify the set of all possible causal effects that are compatible with the observational data. Furthermore, we present some structural conditions on the causal graph under which we can learn causal effects among observed variables uniquely. We also provide necessary and sufficient graphical conditions for unique identification of the number of variables in the system.
 Experiments on synthetic data and real-world data show the effectiveness
of our proposed algorithm on learning causal models.
\end{abstract}

\begin{keywords}
  Causal Discovery, Structural Equation Models, Non-Gaussianity, Latent Variables, Independent Component Analysis.
\end{keywords}

\section{Introduction}
One of the primary goals in empirical sciences is to discover casual relationships among a set of variables of interest in various natural and social phenomena. Such causal relationships can be recovered by conducting controlled experiments. However, performing controlled experiments is often expensive or even impossible due to technical or ethical reasons. Thus, it is vital to develop statistical methods for recovering causal relationships from non-experimental data. 


Probabilistic graphical models are commonly used to represent causal relations. Alternatively, Structural Equation Models (SEM) which further  specify mathematical equations among the variables can be used to represent probabilistic causal influences. Linear SEMs are a special class of SEMs where each variable is a linear combination of  its direct causes and an exogenous noise. Under the causal sufficiency assumption, by utilizing conventional causal structure learning algorithms such as PC \citep{spirtes2000causation} and IC \citep{pearl2009causality}, we can identify a class of models that are equivalent in the sense that they represent the same set of conditional independence assertions obtained from data. If we have background knowledge about the data-generating mechanism, we may further narrow down the possible models that are compatible with the observed data \citep{peters2016causal,ghassami2017budgeted,salehkaleybar2017learning,zhang2017causal,peters2013identifiability,zhang2009identifiability,hoyer2009nonlinear,janzing2012information}. For instance, \cite{shimizu2006linear} proposed a linear non-Gaussian acyclic model (LiNGAM) discovery algorithm that can identify causal structure uniquely by assuming non-Gaussian distributions for the exogenous noises in the linear SEM model. However, LiNGAM algorithm and its regression-based variant (DirectLiNGAM) \citep{shimizu2011directlingam} rely on the causal sufficiency assumption, i.e., no unobserved common causes exist for any pair of variables that are under consideration in the model.

In the presence of latent variables, \cite{hoyer2008estimation} showed that linear SEM can be converted to a canonical form where each latent variable has at least two children and no parents. Such latent variables are commonly called ``latent confounders". Furthermore, they proposed a solution which casts the problem of identifying causal effects among observed variables into an overcomplete-ICA problem and returns multiple causal structures that are observationally equivalent. The time complexity of searching such structures can be as high as ${p \choose p_o}$ where $p_o$ and $p$ are the number of observed and total variables in the system, respectively.
\cite{entner2010discovering} proposed a method that identifies a partial causal structure among the observed variables by recovering all the unconfounded sets\footnote{A set of variables is called unconfounded if there is no variable outside the set which is confounder of some variables in the set. In Figure \ref{fig:exampleintro}, variable $V_3$ is a confouder of variables $V_1$ and $V_2$ but it is not observable. Thus, the set of variables $V_1$ and $V_2$ is not unconfounded.} and then learning the causal effects for each pair of variables in the set. However, their method may return an empty unconfounded set if latent confounders are the cause of most of observed variables in the system such as the simple example of Figure \ref{fig:exampleintro}. \cite{chen2013causality} showed that a causal order and causal effects among observed variables can be identified if the latent confounders have Gaussian distribution and exogenous noises of observed variables are simultaneously super-Gaussian or sub-Gaussian.
\begin{figure}
	\centering
	\includegraphics[width=0.2\textwidth]{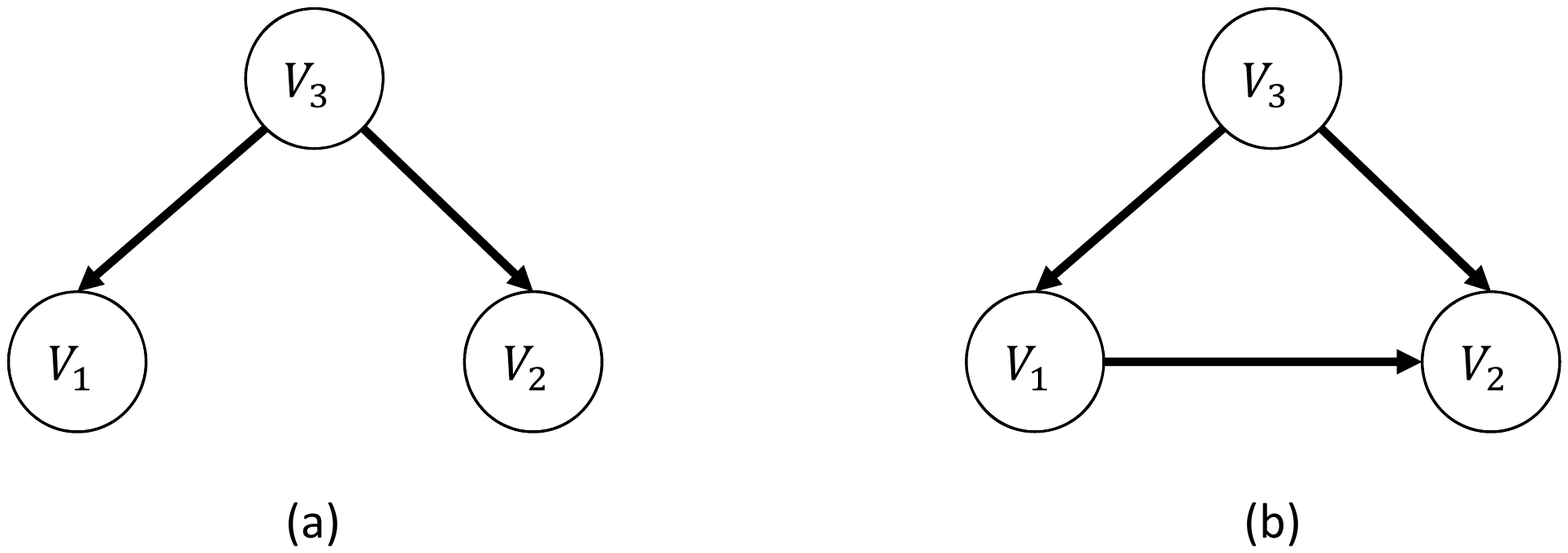}
	\caption{An example of causal graphs: $V_1$ and $V_2$ are observed variables while $V_3$ is latent. }\label{fig:exampleintro}
\end{figure} 
In \citep{tashiro2014parcelingam}, the ideas in DirectLiNGAM was extended to the case where latent confounders exist in the system. The proposed solution first tries to find a root variable (a variable with no parents). Then, the effect of such variable is removed by regressing it out. This procedure continues until any variable and its residual becomes dependent. Subsequently, a similar iterative procedure is used to find a sink variable and remove its effect from other variables. However, this solution may not recover causal order in some causal graphs such as the one in Figure \ref{fig:exampleintro}.\footnote{In Figure \ref{fig:exampleintro}, the root variable ($V_3$) is latent and the regressor of sink variable $V_2$ and the residual are not independent without considering the latent variable $V_3$ in the set of regressors. Thus, no root or sink variable can be identified in the system.}. \cite{shimizu2014bayesian} proposed a Bayesian approach for estimating the causal direction between two observed variables when the sum of non-Gaussian independent latent confounders has a multivariate $t$-distribution. They compute log-marginal likelihoods to infer causal directions. 


Rather surprisingly, although the causal structure is in general not fully identifiable in the presence of latent variables,  we will show that the causal order among the observed  variables is still identifiable under the faithfulness assumption. 
In order to obtain a causal order, we first check whether there exists a causal path between any two observed variables. Subsequently, from this information, we obtain a causal order among them.
Having established a causal order, we aim to figure out whether the causal effects are uniquely identifiable from observational data.
We show by an example that causal effects among observed variables is not uniquely identifiable even if the faithfulness assumption holds true and the exogenous noises are non-Gaussian. We propose a method to identify the set of all possible causal effects efficiently in time that are compatible with the observational data. Furthermore, we present some structural conditions on the causal graph under which causal effects among the observed variables can be identified uniquely. We also provide necessary and sufficient graphical conditions under which the number of latent variables is uniquely identifiable.




The rest of this paper is organized as follows. In Section 2, we define the problem of identifying causal orders and causal effects in linear causal systems with latent variables. In Section 3, we propose our approach to learn the causal order among the observed variables and provide necessary and sufficient graphical conditions under which the number of latent variables is uniquely identifiable. In Section 4, we present a method to find the set of all possible causal effects which are consistent with the observational data and give conditions under which causal effects are uniquely identifiable. We conduct experiments to evaluate
the performance of proposed solutions in
Section 5 and conclude in Section 6.




\section{Problem Definition}

\subsection{Notations}

In a directed graph $G = (\mathcal{V},E)$ with the vertex set $\mathcal{V}=\{V_1,\cdots, V_p\}$ and the edge set $E$, we denote a directed edge from $V_i$ to $V_j$ by $(V_i,V_j)$. A directed path $P=(V_{i_0}, V_{i_1}, \cdots, V_{i_k})$ in $G$ is a sequence of vertices of $G$ where there is a directed edge from $V_{i_j}$ to $V_{i_{j+1}}$ for any $0\leq j \leq k-1$. We define the set of variables $\{V_{i_1},\cdots, V_{i_{k-1}}\}$ as the intermediate variables on the path $P$. We use notation $V_i \rightsquigarrow V_j$ to show that there exists a directed path from $V_i$ to $V_j$. If there is a directed path from $V_i$ to $V_j$, $V_i$ is ancestor of $V_j$ and that $V_j$ is a descendant of $V_i$.  More formally, $anc(V_i)=\{V_j|V_j\rightsquigarrow V_i\}$ and $des(V_i)=\{V_j|V_i\rightsquigarrow V_j\}$. Each variable $V_i$ is an ancestor and a descendant of itself. 

We denote vectors and matrices by boldface letters. The vectors $\mathbf{A}_{i,:}$ and $\mathbf{A}_{:,i}$ represent $i$-th row and column of matrix $\mathbf{A}$, respectively. The $(i,j)$ entry of matrix $\mathbf{A}$ is denoted by $[\mathbf{A}]_{i,j}$. For $n\times m$ matrix $\mathbf{A}$ and $n\times p$ matrix $\mathbf{B}$, the notation $[\mathbf{A},\mathbf{B}]$ denotes the horizontal concatenation. For $n\times m$ matrix $\mathbf{A}$ and $p\times m$ matrix $\mathbf{B}$, the notation $[\mathbf{A};\mathbf{B}]$ shows the vertical concatenation.

\subsection{System Model}

Consider a linear SEM among a set of variables $\mathcal{V}=\{V_1,\cdots,V_{p}\}$:

\begin{equation}
\mathbf{V}=\mathbf{A}\mathbf{V}+\mathbf{N},
\label{eq:model}
\end{equation}
where the vectors $\mathbf{V}$ and $\mathbf{N}$ denote the random variables in $\mathcal{V}$ and their corresponding exogenous noises, respectively. The entry $(i,j)$ of matrix $\mathbf{A}$ shows the strength of direct causal effect of variable $V_j$ on variable $V_i$. We assume that the causal relations among random variables can be represented by a directed acyclic graph (DAG). Thus, the variables in $\mathcal{V}$ can be arranged in a causal order, such that no latter variable causes any earlier variable. We denote such a causal order on the variables by $k$ in which $k(i), i\in\{1,\cdots,p\}$ shows the position of variable $V_i$ in the causal order. The matrix $\mathbf{A}$ can be converted to a strictly lower triangular matrix by permuting its rows and columns simultaneously based on the causal order. 

\begin{example}
	Consider the following linear SEM with four random variables $\{V_1,\cdots$ $,V_4\}$:
	\begin{equation*}
	\left[ {\begin{array}{c}
		V_1 \\
		V_2 \\
		V_3 \\
		V_4 \\
		\end{array} } \right]=
	\left[ {\begin{array}{cccc }
		0 & e & 0 & d\\
		0 & 0 & 0 & 0\\
		0 & a & 0 & 0\\
		0 & b & c & 0\\
		\end{array} } \right]
	\left[ {\begin{array}{c}
		V_1 \\
		V_2 \\
		V_3 \\
		V_4 \\
		\end{array} } \right]+
	\left[ {\begin{array}{c}
		N_{1} \\
		N_{2} \\
		N_{3} \\
		N_{4} \\
		\end{array} } \right],
	\label{eq:ex1}
	\end{equation*}
	where $a,b,c,d$ and $e$ are some constants (see Figure \ref{fig:ex1}). A causal order in this SEM model would be: $k(1)=4,k(2)=1,k(3)=2,k(4)=3$. Hence, the matrix $\mathbf{P}\mathbf{A}\mathbf{P}^T$ is strictly lower triangular where $\mathbf{P}$ is a permutation matrix associated with $k$ defined by the following non-zero entries: $\{(k(i),i)| 1\leq i\leq 4\}$. 
	

	%
	\label{ex:1}
\end{example}


We split random variables in $\mathbf{V}$ into an observed vector $\mathbf{V_o}\in \mathbb{R}^{p_o}$ and a latent vector $\mathbf{V_l}\in \mathbb{R}^{p_l}$ where $p_o$ and $p_l$ are the number of observed and latent variables, respectively.
Without loss of generality, we assume that first $p_o$ entries of $\mathbf{V}$ are observable, i.e. $\mathbf{V_o}=[V_1,\cdots,V_{p_o}]^T$ and $\mathbf{V_l}=[V_{p_o+1},\cdots,V_p]^T$. Therefore,
\begin{equation}
\left[ {\begin{array}{c}
	\mathbf{V_o} \\
	\mathbf{V_l} \\
	\end{array} } \right]=
\left[ {\begin{array}{cc }
	\mathbf{A_{oo}} &\mathbf{A_{ol}} \\
	\mathbf{A_{lo}} &\mathbf{A_{ll}} \\
	\end{array} } \right]\left[ {\begin{array}{c}
	\mathbf{V_o} \\
	\mathbf{V_l} \\
	\end{array} } \right]+
\left[ {\begin{array}{c}
	\mathbf{N_o} \\
	\mathbf{N_l} \\
	\end{array} } \right],
\label{eq:main}
\end{equation}
where $\mathbf{N_o}$ and $\mathbf{N_l}$ are the vectors of exogenous noises of $\mathbf{V_o}$ and $\mathbf{V_l}$, respectively.   
Furthermore, we have: $\mathbf{A}=[\mathbf{A_{oo}},\mathbf{A_{ol}};\mathbf{A_{lo}},\mathbf{A_{ll}}]$.

The causal order among all variables $k$, induces a causal order $k_o$ among the observed variables as follows:
For any two observed variables $V_i,V_j$, $1\leq i,j\leq p_o$, $k_o(i)<k_o(j)$ if $k(i)<k(j)$. Similarly, $k$ induces a causal order among latent variables. We denote this causal order by $k_l$. It can be easily shown that $\mathbf{A_{oo}}$ and $\mathbf{A_{ll}}$ can be converted to strictly lower triangular matrices by permuting rows and columns simultaneously based on causal orders $k_o$ and $k_l$, respectively.

\begin{figure}
	\centering
	\includegraphics[width=0.33\textwidth]{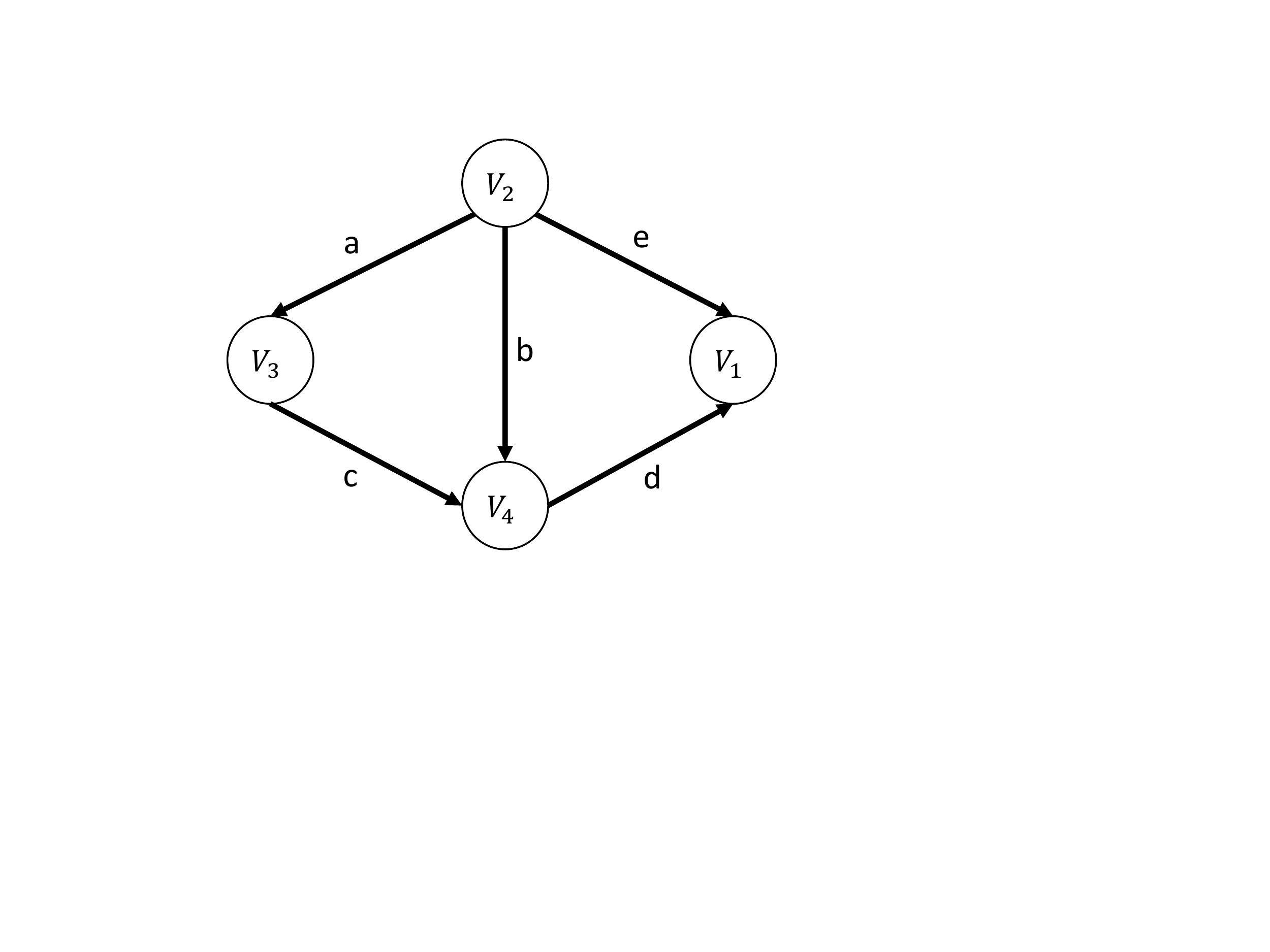}
	\caption{Causal graph of Example \ref{ex:1}. }
	\label{fig:ex1}
\end{figure}

\begin{example}
	In Example \ref{ex:1}, suppose that only variables $V_1$ and $V_2$ are observable. Then, the causal order among observed variables would be: $k_o(1)=2$ and $k_o(2)=1$. Thus, $\mathbf{P}\mathbf{A_{oo}}\mathbf{P}^T$ is a strictly lower triangular matrix where $\mathbf{P}=[0,1;1,0]$. For the latent variables, $k_l(3)=1$ and $k_l(4)=2$. 
\end{example}

In the remainder of this section, we briefly describe LiNGAM algorithm, which is capable of recovering the matrix $\mathbf{A}$ uniquely if all variables in the model are observable and exogenous noises are non-Gaussian  \citep{shimizu2006linear}.
The vector $\mathbf{V}$ in Equation \eqref{eq:model} can be written as a linear combination of exogenous noises as follows:
\begin{equation}
\mathbf{V}=\mathbf{B}\mathbf{N},
\label{eq:3}
\end{equation}
where $\mathbf{B}=(\mathbf{I}-\mathbf{A})^{-1}$. The above equation fits into the standard linear Independent Component Analysis (ICA) framework, where independent non-Gaussian components are all variables in $\mathbf{N}$. 
By utilizing statistical techniques in ICA \citep{hyvarinen2004independent}, matrix $\mathbf{B}$ can be identified up to scaling and permutations of its columns. 
More specifically, the independent components of ICA as well as the estimated $\mathbf{B}$ matrix are not uniquely determined because permuting and rescaling them does not change their mutual independence. 
So without knowledge of the ordering and scaling of the noise terms, the following general ICA model for $\mathbf{V}$ holds:
\begin{equation}
\mathbf{V} = \mathbf{\tilde{B}} \mathbf{\tilde{N}},
\label{eq:4}
\end{equation}
where $\mathbf{\tilde{N}}$ contains independent components and these components (resp. the columns of $\mathbf{\tilde{B}}$) are a permuted and rescaled version of those in $\mathbf{N}$ (resp. the columns of $\mathbf{B}$). In what follows, we use $\mathbf{B}$ for the matrix $\mathbf{B} = (\mathbf{I}-\mathbf{A})^{-1}$ while $\mathbf{\tilde{B}}$ is the mixing matrix for the ICA model, as given in \eqref{eq:4}. Hence $\tilde{\mathbf{B}}$ can be written as:
$$\mathbf{\tilde{B}} = \mathbf{B} \mathbf{P} \mathbf{\Lambda},$$
where $\mathbf{P}$ is a permutation matrix and $\mathbf{\Lambda}$ is a diagonal scaling matrix.
Yet the corresponding causal model, represented by $\mathbf{A}$, can be uniquely identified because of its acyclicity constraint. In particular, the inverse of $\mathbf{B}$ can be converted uniquely to a lower triangular matrix having all-ones on its diagonal by some scaling and permutation of the rows.

\section{Identifying Causal Orders among Observed Variables}
\label{section:path}

Since the graph with adjacency matrix $\mathbf{A}$ is acyclic, there exists an integer $d$ such that $\mathbf{A}^d=0$. Thus, we can rewrite $\mathbf{B}$ in the following form:
\begin{equation}
\mathbf{B}=(\mathbf{I}-\mathbf{A})^{-1}=\sum_{k=0}^{d-1} \mathbf{A}^k.
\end{equation}

It can be seen that there exists a casual path of length $k$ from the exogenous noise of variable $V_i$ to variable $V_j$ if entry $(j,i)$ of matrix $\mathbf{A}^k$ is nonzero. We define $[\mathbf{B}]_{j,i}$ as the total causal effect of variable $V_i$ on variable $V_j$. 

\begin{assumption}
	(Faithfulness assumption) The total causal effect from variable $V_i$ to $V_j$ is nonzero if there is a causal path from $V_i$ to $V_j$. Thus, we have: $[\mathbf{B}]_{j,i}\neq 0$ if $V_i \rightsquigarrow V_j$. 
	\label{assu:fauith}
\end{assumption}

In the following lemma, we list two consequences of the faithfulness assumption that are immediate from the definition. 


\begin{lemma}
	Under the faithfulness assumptions, for any two observed variables $V_i$ and $V_j$, $1\leq i,j\leq p_o$, the following holds:
	\\ (i) Suppose that $V_i \rightsquigarrow V_j$. If $[\mathbf{B}]_{i,k}\neq 0$ for some $k\neq j$, then $[\mathbf{B}]_{j,k}\neq 0 $.
	\\ (ii) If there is no causal path between $V_i$ and $V_j$, then $[\mathbf{B}]_{i,j}=0$ and $[\mathbf{B}]_{j,i}=0$.
	\label{lemma:faithfulness} 
\end{lemma}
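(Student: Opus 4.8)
The plan is to reduce both parts to a single structural fact linking the nonzero pattern of $\mathbf{B}$ to reachability in the DAG, and then to invoke transitivity of directed reachability. Concretely, from the expansion $\mathbf{B}=\sum_{k=0}^{d-1}\mathbf{A}^k$ together with the observation already recorded in the text that $[\mathbf{A}^k]_{a,b}\neq 0$ forces a directed path of length $k$ from $V_b$ to $V_a$, I would first establish the implication
$$[\mathbf{B}]_{a,b}\neq 0 \;\Longrightarrow\; V_b \rightsquigarrow V_a.$$
This is purely structural and needs no faithfulness: if no directed path from $V_b$ to $V_a$ existed, then $[\mathbf{A}^k]_{a,b}=0$ for every $k\ge 1$, and also for $k=0$ when $a\neq b$, so the entire sum $[\mathbf{B}]_{a,b}$ would vanish. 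Assumption \ref{assu:fauith} supplies the converse implication, $V_b \rightsquigarrow V_a \Rightarrow [\mathbf{B}]_{a,b}\neq 0$, so that nonzero entries of $\mathbf{B}$ and directed paths become interchangeable.

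For part (ii), I would simply apply the structural direction twice. ``No causal path between $V_i$ and $V_j$'' means neither $V_i\rightsquigarrow V_j$ nor $V_j\rightsquigarrow V_i$ (in particular $i\neq j$). The absence of a path from $V_j$ to $V_i$ gives $[\mathbf{B}]_{i,j}=0$, and the absence of a path from $V_i$ to $V_j$ gives $[\mathbf{B}]_{j,i}=0$, which is exactly the claim. Note that faithfulness is not even needed here --- this half is just the ``no path $\Rightarrow$ zero effect'' direction.

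For part (i), I would chain a reachability argument. Since $[\mathbf{B}]_{i,k}\neq 0$, the structural implication yields a directed path $V_k\rightsquigarrow V_i$; by hypothesis we also have $V_i\rightsquigarrow V_j$. Concatenating these gives a directed walk from $V_k$ to $V_j$ through $V_i$, and in a DAG such a walk implies a directed path, so $V_k\rightsquigarrow V_j$. Because $k\neq j$, at least one of the two concatenated paths has positive length, so this is a genuine path and not the trivial length-zero self-path; hence I may apply faithfulness to conclude $[\mathbf{B}]_{j,k}\neq 0$, as desired.

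The argument is essentially routine once the nonzero-pattern/reachability dictionary is in place, so there is no serious obstacle; the one point that requires care is the bookkeeping in part (i) about path lengths --- ensuring that the composed path $V_k\rightsquigarrow V_j$ is nontrivial so that faithfulness, which concerns genuine directed paths, actually applies. I would handle this through the observation that $k\neq j$ rules out both component paths being length zero simultaneously, since that would force $k=i=j$.
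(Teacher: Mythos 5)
Your proof is correct, and it follows the only natural route here: the paper itself offers no written proof, stating only that both parts are ``immediate from the definition,'' and what you have written is precisely the argument being alluded to --- the structural implication $[\mathbf{B}]_{a,b}\neq 0 \Rightarrow V_b \rightsquigarrow V_a$ (from $\mathbf{B}=\sum_{k}\mathbf{A}^k$, needing no faithfulness) paired with the faithfulness converse, with (ii) using only the structural direction and (i) using transitivity of reachability plus faithfulness. Your care about the trivial length-zero path in part (i) is a reasonable piece of bookkeeping, though it is also harmless either way since the diagonal of $\mathbf{B}$ is identically $1$.
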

%

Based on Equation \eqref{eq:main}, we can write $\mathbf{V_o}$ in terms of $\mathbf{N_o}$ and $\mathbf{N_l}$ as follows
\begin{equation}
\mathbf{V_o}=(\mathbf{I}-\mathbf{D})^{-1}\mathbf{N_o}+ (\mathbf{I}-\mathbf{D})^{-1}\mathbf{A_{ol}}(\mathbf{I}-\mathbf{A_{ll}})^{-1}\mathbf{N_l},
\label{eq:mainp}
\end{equation}
where $\mathbf{D}=\mathbf{A_{oo}}+\mathbf{A_{ol}}(\mathbf{I}-\mathbf{A_{ll}})^{-1}\mathbf{A_{lo}}$. Let $\mathbf{B_o}:=(\mathbf{I}-\mathbf{D})^{-1}$, $\mathbf{B_l}:=(\mathbf{I}-\mathbf{D})^{-1}\mathbf{A_{ol}} (\mathbf{I}-\mathbf{A_{ll}})^{-1}$, and $\mathbf{N}:=[\mathbf{N_o};\mathbf{N_l}]$. Thus,  $\mathbf{V_o}=\mathbf{B'}\mathbf{N}$ where  $\mathbf{B'}:=[\mathbf{B_o},\mathbf{B_l}]$. This equation fits into a linear over-complete ICA where the exogenous noises are non-Gaussian and the number of observed variables is less than the number of variables in the system.
The following proposition asserts when the columns of matrix $\mathbf{B'}$ are still identifiable up to some permutations and scaling. 

\begin{definition}
	(Reducibility of a matrix) A matrix is reducible if two of its columns are linearly dependent.
\end{definition}

\begin{proposition}(\citep{eriksson2004identifiability}, Theorem 3)
	In the linear over-completer ICA problem, the columns of mixing matrix can be identified up to some scaling and permutation if it is not reducible. 
	\label{prop:id}
\end{proposition}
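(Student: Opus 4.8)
The plan is to work with the second characteristic function (cumulant generating function) of the observed vector and reduce the claim to a Darmois--Skitovich / Marcinkiewicz type rigidity statement. Write a candidate decomposition as $\mathbf{V_o}=\sum_{k}\mathbf{b}_k N_k$, where $\mathbf{b}_k$ is the $k$-th column of $\mathbf{B'}$ and the $N_k$ are independent and non-Gaussian. By independence the second characteristic function splits additively on a neighborhood of the origin,
\[
\kappa_{\mathbf{V_o}}(\mathbf{t}) \;=\; \log \mathbb{E}\!\left[e^{\,i\,\mathbf{t}^{\top}\mathbf{V_o}}\right] \;=\; \sum_{k}\psi_k\!\left(\mathbf{b}_k^{\top}\mathbf{t}\right),
\]
with $\psi_k$ the second characteristic function of $N_k$. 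Supposing a competing admissible decomposition $\mathbf{V_o}=\sum_{j}\mathbf{c}_j M_j$ reproduces the same law, matching the two expansions yields the functional equation $\sum_k \psi_k(\mathbf{b}_k^{\top}\mathbf{t}) = \sum_j \phi_j(\mathbf{c}_j^{\top}\mathbf{t})$ near $0$.

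Next I would isolate a single direction. Grouping all directions $\{\mathbf{b}_k\}\cup\{\mathbf{c}_j\}$ into collinearity classes, I rewrite the difference as a vanishing sum of ridge functions $\sum_\ell h_\ell(\mathbf{a}_\ell^{\top}\mathbf{t})=0$ with pairwise non-collinear $\mathbf{a}_\ell$. The key analytic lemma I would invoke is the classical fact that an identically-zero sum of ridge functions with pairwise linearly independent directions forces each summand to be a polynomial (of degree bounded by the number of directions); this is proved by repeatedly applying directional derivatives $\mathbf{w}^{\top}\nabla$ along vectors orthogonal to all but one of the $\mathbf{a}_\ell$, annihilating every other term. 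Applying this to a column $\mathbf{b}_k$ that is \emph{not} collinear with any $\mathbf{c}_j$ makes its profile $\psi_k$ a polynomial; by Marcinkiewicz's theorem a second characteristic function that is polynomial has degree at most two, so $N_k$ is Gaussian, contradicting non-Gaussianity. Hence every column of $\mathbf{B'}$ must be collinear with some $\mathbf{c}_j$, and symmetrically.

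The non-reducibility hypothesis now does the combinatorial work: since no two columns of $\mathbf{B'}$ (resp.\ of the competing matrix) lie in the same line, each collinearity class is a singleton, and the map sending each $\mathbf{b}_k$ to the unique competing column collinear with it is a well-defined bijection $\sigma$ with $\mathbf{c}_{\sigma(k)}=\lambda_k\mathbf{b}_k$. Thus the two mixing matrices agree up to a reordering by $\sigma$ and per-column rescaling $\lambda_k$, which is precisely identifiability up to the permutation/scaling ambiguity $\mathbf{B'}\mapsto \mathbf{B'}\mathbf{P}\mathbf{\Lambda}$, mirroring the complete-ICA ambiguity recorded in \eqref{eq:4}.

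I expect the main obstacle to be the isolation step: making the sum-of-ridge-functions argument rigorous in arbitrary dimension and carefully handling the directions that \emph{are} shared between the two decompositions, since the differential operators must annihilate the shared terms without trivializing the target one. It is also worth noting why the characteristic-function (all-orders) route is needed rather than decomposing a single higher-order cumulant tensor $\sum_k \kappa_r(N_k)\,\mathbf{b}_k^{\otimes r}$: under the bare non-collinearity hypothesis a fixed cumulant order need not satisfy a Kruskal-type uniqueness condition, and different sources may have their lowest nonvanishing cumulant at different orders, so only the full $\psi_k$ captures non-Gaussianity uniformly. Finally, since $\kappa_{\mathbf{V_o}}$ is only guaranteed smooth near the origin, the argument must be run locally and the conclusion ``polynomial $\psi_k \Rightarrow$ Gaussian'' obtained via Marcinkiewicz's theorem, which is the clean device that ultimately exploits non-Gaussianity of the sources.
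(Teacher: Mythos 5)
This proposition is imported verbatim from \citet{eriksson2004identifiability} (their Theorem~3); the paper gives no proof of its own, so there is nothing internal to compare against. Your argument is, in substance, the standard proof of that cited result: additivity of the second characteristic function over independent sources, reduction to a vanishing sum of ridge functions over collinearity classes, the classical lemma that such a sum with pairwise non-collinear directions forces each profile to be a polynomial, Marcinkiewicz/Cram\'er--Kagan--Linnik--Rao to rule out non-Gaussian polynomial profiles, and then non-reducibility to turn the collinearity classes into a permutation-plus-scaling bijection. This is the right skeleton and I see no wrong step. Two points deserve care if you were to write it out fully. First, the annihilation step should be phrased with iterated \emph{finite-difference} operators $f(\cdot)\mapsto f(\cdot+\mathbf{h})-f(\cdot)$, one increment $\mathbf{h}_m\perp\mathbf{a}_m$ (with $\mathbf{h}_m^{\top}\mathbf{a}_\ell\neq 0$, which non-collinearity guarantees) per competing direction, rather than directional derivatives: $\psi_k$ is only known to be continuous near the origin, and the polynomial conclusion from vanishing iterated differences does not require smoothness; note also that no single vector is orthogonal to all but one direction, so the operators must be composed. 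Second, the bijection at the end silently uses non-reducibility of the \emph{competing} mixing matrix as well (otherwise two competing columns could share a line, or the source counts could differ); this matches how the theorem is stated in the source, where identifiability is asserted within the class of non-reducible representations, but it should be made explicit. Your closing remark about why a single fixed-order cumulant tensor would not suffice is a correct and worthwhile observation.
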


\begin{lemma}
	The columns of $\mathbf{B'}$ corresponding to any two observed variables are linearly independent.
	\label{lemma:obsred}
\end{lemma}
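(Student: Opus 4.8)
The plan is to reduce the statement to the nonsingularity of the $p_o \times p_o$ block $\mathbf{B_o} = (\mathbf{I}-\mathbf{D})^{-1}$. Since $\mathbf{V_o} = \mathbf{B'}\mathbf{N} = \mathbf{B_o}\mathbf{N_o} + \mathbf{B_l}\mathbf{N_l}$, the columns of $\mathbf{B'} = [\mathbf{B_o},\mathbf{B_l}]$ that correspond to two observed variables $V_i$ and $V_j$ are precisely the $i$-th and $j$-th columns of $\mathbf{B_o}$, i.e.\ the columns indexed by their own exogenous noises $N_i$ and $N_j$. Hence it suffices to show that $\mathbf{B_o}$ has full column rank; then any two of its columns are linearly independent, which is exactly the claim. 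So the entire lemma collapses to proving that $\mathbf{B_o}$ is invertible.

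First I would establish that $\mathbf{I}-\mathbf{D}$ is invertible by showing that $\mathbf{D}$ can be brought to strictly lower triangular form by the permutation associated with the induced order $k_o$. The key observation is the combinatorial meaning of the entries of $\mathbf{D} = \mathbf{A_{oo}} + \mathbf{A_{ol}}(\mathbf{I}-\mathbf{A_{ll}})^{-1}\mathbf{A_{lo}}$. Expanding $(\mathbf{I}-\mathbf{A_{ll}})^{-1} = \sum_{k\ge 0}\mathbf{A_{ll}}^{k}$, which is valid because $\mathbf{A_{ll}}$ is nilpotent (the latent subgraph being acyclic), the $(i,j)$ entry of $\mathbf{D}$ is a sum over all directed paths from $V_j$ to $V_i$ whose intermediate vertices, if any, are all latent. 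Since the full graph with adjacency matrix $\mathbf{A}$ is acyclic, every such path strictly advances the global order $k$, and therefore $k_o(j) < k_o(i)$ whenever $[\mathbf{D}]_{i,j}\neq 0$. Thus $\mathbf{P}\mathbf{D}\mathbf{P}^T$ is strictly lower triangular for the permutation $\mathbf{P}$ corresponding to $k_o$, so $\mathbf{D}$ is nilpotent and $\mathbf{I}-\mathbf{D}$ is invertible.

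From here the conclusion is immediate: $\mathbf{B_o} = (\mathbf{I}-\mathbf{D})^{-1}$ is the inverse of an invertible matrix, hence nonsingular, so its columns are linearly independent. Equivalently, writing $\mathbf{B_o} = \sum_{k\ge 0}\mathbf{D}^{k}$, the permuted matrix $\mathbf{P}\mathbf{B_o}\mathbf{P}^T$ is lower triangular with all-ones diagonal and therefore has determinant one, which makes the full-rank claim transparent.

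I do not expect a genuine obstacle here; the only step requiring care is the second one, namely justifying that $\mathbf{D}$ inherits the acyclic (triangular) structure. This rests on the path interpretation above together with the fact that routing through latent intermediate nodes cannot create a cycle or violate the order $k_o$, which in turn follows entirely from the global acyclicity of $\mathbf{A}$ and the nilpotency of $\mathbf{A_{ll}}$. It is worth noting that this argument uses neither non-Gaussianity nor the faithfulness assumption: linear independence of the observed-noise columns is a purely structural consequence of acyclicity.
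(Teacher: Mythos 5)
Your proof is correct, but it takes a genuinely different route from the paper. The paper argues pairwise and locally: using Lemma~\ref{lemma:faithfulness}, it inspects the zero/nonzero pattern of the four entries $[\mathbf{B'}]_{i,i}$, $[\mathbf{B'}]_{j,j}$, $[\mathbf{B'}]_{i,j}$, $[\mathbf{B'}]_{j,i}$ in the two cases (no causal path between $V_i$ and $V_j$, or one an ancestor of the other) and concludes the two columns cannot be proportional. You instead identify the observed-noise columns of $\mathbf{B'}$ with the columns of the square block $\mathbf{B_o}=(\mathbf{I}-\mathbf{D})^{-1}$ and deduce everything from its invertibility, which you obtain by showing $\mathbf{P}\mathbf{D}\mathbf{P}^T$ is strictly lower triangular --- in effect you re-derive the paper's later Lemma~\ref{lemma:1} and feed it back into this earlier statement. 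Your route buys two things: a strictly stronger conclusion (all $p_o$ observed columns are \emph{jointly} linearly independent, not merely pairwise), and independence from the faithfulness assumption, since you only use the contrapositive ``no path $\Rightarrow$ zero entry,'' which is purely structural. What the paper's route buys is the explicit zero-pattern facts (diagonal entries nonzero, cross entries zero or one-sided), which are exactly what gets reused immediately afterwards to read off causal order from $\mathbf{\tilde{B}''}$ via the counts $n_{0*}$ and $n_{*0}$; your invertibility argument does not by itself supply that support information. Both proofs are sound.
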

\begin{proof}
	Consider any two observed variables $V_i$ and $V_j$. We know that $[\mathbf{B'}]_{i,i}$ and $[\mathbf{B'}]_{j,j}$ are non-zero. Furthermore, $\mathbf{B'}$ is a sub-matrix of $\mathbf{B}$. Hence, based on Lemma \ref{lemma:faithfulness} ($ii$), if there is no causal path between $V_i$ and $V_j$, we have:  $[\mathbf{B'}]_{i,j}=0$ and $[\mathbf{B'}]_{j,i}=0$. Thus, $[\mathbf{B'}]_{:,i}$ and $[\mathbf{B'}]_{:,j}$ are not linearly dependent. Furthermore, if one of the variable is the ancestor of the another one, let say $V_i\in anc(V_j)$, according to Lemma \ref{lemma:faithfulness} ($i$), $[\mathbf{B'}]_{j,i}\neq 0$ while $[\mathbf{B'}]_{i,j}=0$. Thus, $[\mathbf{B'}]_{:,i}$ and $[\mathbf{B'}]_{:,j}$ are also not linearly dependent in this case and the proof is complete.
\end{proof}

Although columns of $\mathbf{B'}$ corresponding to the observed variables are pairwise linearly independent, a column corresponding to a latent variable $V_i$ might be linearly dependent on a column corresponding to an observed or latent variable $V_j$ (see Example \ref{ex:3new2}).  In that case, we can remove the column $[\mathbf{B'}]_{:,i}$ and $N_i$ from matrix $\mathbf{B'}$ and vector $\mathbf{N}$, respectively and replace $N_j$ by $N_j+\alpha N_i$ where $\alpha$ is a constant such that $[\mathbf{B'}]_{:,i}=\alpha[\mathbf{B'}]_{:,j}$.
%
We can continue this process until all the remaining columns are pairwise linearly independent. 
Let $\mathbf{B''}$ and $\mathbf{N''}$ be the resulting mixing matrix and exogenous noise vector, respectively. According to Lemma \ref{lemma:obsred}, all the columns of $\mathbf{B'}$ corresponding to observed variables are in $\mathbf{B''}$. We utilize the matrix $\mathbf{B''}$ to recover a causal order among the observed variables.


Since the matrix $\mathbf{B''}$ is not reducible, its column can be identified up to some scaling and permutation according to Proposition \ref{prop:id}. Let $\mathbf{\tilde{B}''}$ be the recovered matrix containing columns of $\mathbf{B''}$.
Consider any two observed variables $V_i$ and $V_j$, i.e., $1\leq i,j\leq p_o$.   
We extract two rows of $\mathbf{\tilde{B}''}$ corresponding to variables $V_i$ and $V_j$. 
%
Let $n_{0*}$ be the number of columns in $[\mathbf{\tilde{B}''}_{i,:};\mathbf{\tilde{B}''}_{j,:}]$  whose first entries are zero but second entries are nonzero. Similarly, let $n_{*0}$ be the number of columns that their first entries are nonzero but their second entries are zero. The following lemma asserts that the existence of a causal path between $V_i$ and $V_j$ can be checked from $n_{0*}$ and $n_{*0}$ (or equivalently, $\mathbf{\tilde{B}}''$).

\begin{lemma}
	Under the faithfulness assumption, the existence of a causal path between any two observed variable can be inferred from matrix $\mathbf{\tilde{B}}''$.
\end{lemma}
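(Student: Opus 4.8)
The plan is to show that the two integers $n_{0*}$ and $n_{*0}$, read off directly from $\mathbf{\tilde{B}}''$, already determine whether a causal path exists. First I would observe that passing from $\mathbf{B}''$ to the recovered $\mathbf{\tilde{B}}''$ only permutes the columns and rescales each by a nonzero constant (Proposition \ref{prop:id}). Neither operation alters, within a fixed column, which entries vanish, and neither changes the \emph{number} of columns exhibiting a prescribed zero/nonzero pattern across the two rows indexed by $V_i$ and $V_j$. Hence $n_{0*}$ and $n_{*0}$ are well defined and may indeed be computed from $\mathbf{\tilde{B}}''$ alone.

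Next I would translate the zero pattern of a column into an ancestry statement. The merging step that builds $\mathbf{B}''$ only deletes columns and leaves the retained ones untouched, so every surviving column of $\mathbf{B}''$ equals the observed rows of some column $[\mathbf{B}]_{:,k}$ of $\mathbf{B}$. Since $[\mathbf{B}]_{i,k}=\sum_{m}[\mathbf{A}^m]_{i,k}$ vanishes structurally whenever no directed path reaches $V_i$ from $V_k$ and, by Assumption \ref{assu:fauith}, is nonzero whenever $V_k\rightsquigarrow V_i$, the $V_i$-entry of such a column is zero exactly when $V_k\notin anc(V_i)$. Writing $S_i$ and $S_j$ for the sets of surviving sources lying in $anc(V_i)$ and in $anc(V_j)$, we obtain $n_{*0}=|S_i\setminus S_j|$ and $n_{0*}=|S_j\setminus S_i|$. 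Moreover $V_i\in S_i$ and $V_j\in S_j$ because $[\mathbf{B}]_{i,i}\neq0$ and $[\mathbf{B}]_{j,j}\neq0$, and by Lemma \ref{lemma:obsred} the columns attached to the observed variables $V_i$ and $V_j$ are never deleted.

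I would then split into three cases. If $V_i\rightsquigarrow V_j$, part (i) of Lemma \ref{lemma:faithfulness} shows that every ancestor of $V_i$ is an ancestor of $V_j$, so $S_i\subseteq S_j$ and $n_{*0}=0$, while $V_j\in S_j\setminus S_i$ forces $n_{0*}\geq1$; the case $V_j\rightsquigarrow V_i$ is symmetric with the roles of $n_{0*}$ and $n_{*0}$ exchanged. If there is no causal path, part (ii) of Lemma \ref{lemma:faithfulness} gives $[\mathbf{B}]_{i,j}=[\mathbf{B}]_{j,i}=0$, so $V_i\in S_i\setminus S_j$ and $V_j\in S_j\setminus S_i$, whence $n_{*0}\geq1$ and $n_{0*}\geq1$. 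Acyclicity rules out $S_i=S_j$ for $i\neq j$, so these three situations are exhaustive and the counts separate them: a causal path between $V_i$ and $V_j$ exists if and only if $n_{0*}=0$ or $n_{*0}=0$ (and the direction is recovered as a bonus, according to which of the two is zero). This establishes the claim.

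The point that needs the most care, and which I expect to be the main obstacle, is the bookkeeping of the preprocessing that produces $\mathbf{B}''$ from $\mathbf{B}'$: I must verify that collapsing proportional columns never disturbs the zero pattern of a \emph{surviving} column on the observed rows, which holds because proportional columns share a zero pattern and the retained column is left unchanged, and that the columns of $V_i$ and $V_j$ are themselves never removed, which is precisely Lemma \ref{lemma:obsred}. Once this is secured, everything reduces to the invariance of the pattern counts under permutation and scaling, the reduction of $n_{0*},n_{*0}$ to symmetric differences of ancestor sets, and the strictness supplied by the diagonal entries $[\mathbf{B}]_{i,i}\neq0$ together with the two consequences of faithfulness recorded in Lemma \ref{lemma:faithfulness}.
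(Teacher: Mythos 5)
Your proof is correct and follows essentially the same route as the paper's: both reduce the question to the zero/nonzero column patterns of $\mathbf{\tilde{B}}''$ restricted to the rows of $V_i$ and $V_j$, which are invariant under the ICA permutation and scaling, and both then invoke Lemma \ref{lemma:faithfulness} (together with $[\mathbf{B}]_{i,i}\neq 0$, $[\mathbf{B}]_{j,j}\neq 0$, and Lemma \ref{lemma:obsred}) to conclude that $V_i\rightsquigarrow V_j$ holds exactly when $n_{*0}=0$ and $n_{0*}>0$. Your reformulation of $n_{*0}$ and $n_{0*}$ as the sizes of the set differences $S_i\setminus S_j$ and $S_j\setminus S_i$ is a slightly more explicit packaging of the same argument, not a different method.
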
 
\noindent
{\bf Proof}.
First, we show that if $V_i \rightsquigarrow V_j$, then $n_{0*}>0$ and $n_{*0}=0$. We know that the matrix $[\mathbf{\tilde{B}''}_{i,:};\mathbf{\tilde{B}''}_{j,:}]$ can be converted to $[\mathbf{B}''_{i,:};\mathbf{B}''_{j,:}]$ by some permutation and scaling of its columns. Moreover,
 $\mathbf{B''}$ contains some of the columns of $\mathbf{B'}$ including all the columns corresponding to the observed variables.
 Thus, from Lemma \ref{lemma:faithfulness}, we know that if $[\mathbf{B''}]_{i,k}\neq 0$ for any $k\neq j$, then $[\mathbf{B''}]_{j,k}\neq 0$. Moreover, we have: $[\mathbf{B''}]_{j,j}\neq 0$ and $[\mathbf{B''}]_{i,j}=0$. Hence, we can conclude that: $n_{0*}>0$ and $n_{*0}=0$. 

If $n_{0*}>0$ and $n_{*0}=0$, then $V_i \rightsquigarrow V_j$. By contradiction, suppose that there is no causal path between $V_i$ and $V_i$ or $V_j \rightsquigarrow V_i$. The second case ($V_j \rightsquigarrow V_i$) does not happen due to what we just proved. Furthermore, from Lemma \ref{lemma:faithfulness}, we know that $[\mathbf{B''}]_{i,i}\neq 0$, $[\mathbf{B''}]_{i,j}= 0$. Therefore, $n_{*0}>0$ which is in contradiction with our assumption. Hence, we can conclude that $n_{0*}>0$ and $n_{*0}=0$ if and only if $V_i \rightsquigarrow V_j$.  
\hfill \BlackBox
\\


We can construct an auxiliary directed graph whose vertices are the observed variables and a directed edge exists from $V_i$ to $V_j$ if $V_i\rightsquigarrow V_j$ (which we can infer from $n_{*0}$ and $n_{0*}$). Any causal order over the auxiliary graph is a correct causal order among the observed variables $\mathbf{V_o}$.  
%

\begin{example}
	Consider the causal graph in Figure \ref{fig:ex3new2}. Suppose that variables $V_3$ and $V_4$ are latent. The matrix $\mathbf{B'}$ would be:
	\begin{equation*}
	\begin{bmatrix}
		1 & 0 & 0 & a\\
		d & 1 & e & c+ad+be
	\end{bmatrix}.
	\end{equation*}
	We can remove the third column from $\mathbf{B'}$ and update the vector $\mathbf{N}$ to $[N_1;N_2+eN_3;N_4]$. Thus, the matrix $\mathbf{B''}$ is equal to:
	\begin{equation*}
	\begin{bmatrix}
	1 & 0 & a\\
	d & 1 & c+ad+be
	\end{bmatrix},
	\end{equation*}
	which is not reducible. Without loss of generality, assume that the recovered matrix $\mathbf{\tilde{B}}''$ is equal to $\mathbf{B}''$. Therefore, $n_{0*}=1$ and $n_{*0}=0$. Hence, we can infer that there is a causal path from $V_1$ to $V_2$.
	\label{ex:3new2}
\end{example}

\begin{figure}
	\centering
	\includegraphics[width=0.33\textwidth]{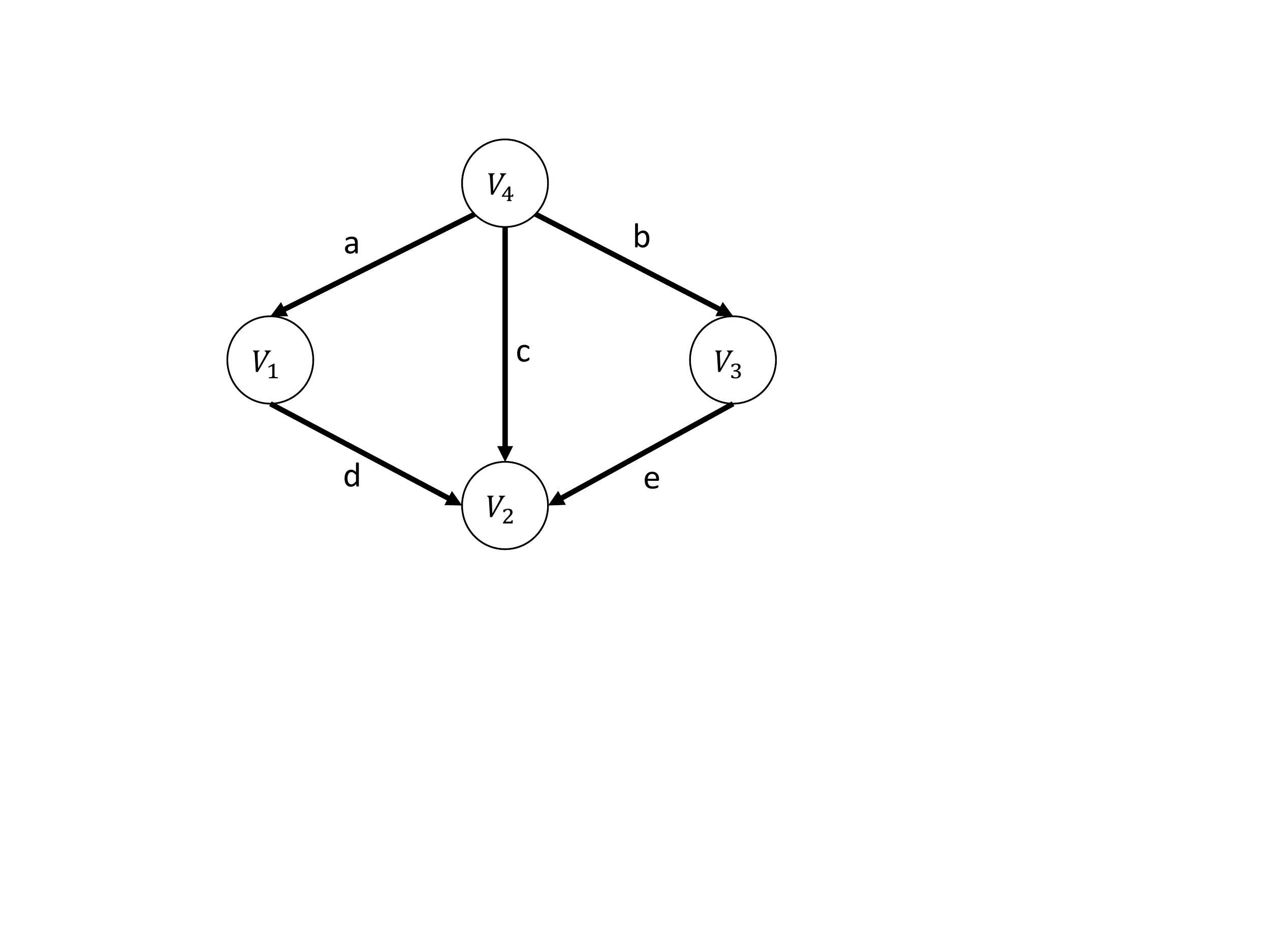}
	\caption{Causal graph of Example \ref{ex:3new2}. }
	\label{fig:ex3new2}
\end{figure}

\subsection*{Recovering the Number of Variables in the System}
 According to Proposition \ref{prop:id}, the number of variables in the system can be recovered if and only if the matrix $\mathbf{B'}$ is not reducible. Furthermore, Equation \eqref{eq:mainp} implies that matrix $\mathbf{B'}$ is not reducible if and only if the columns of the following matrix are not linearly independent: $[\mathbf{I}_{p_o\times p_o}|\mathbf{A_{ol}}(\mathbf{I}-\mathbf{A_{ll}})^{-1}]$. In the rest of this section, we will present equivalent necessary and sufficient graphical conditions under which the number of variables in the systems can be uniquely identified.
But before that, we present a simple example where $[\mathbf{I}_{p_o\times p_o}|\mathbf{A_{ol}}(\mathbf{I}-\mathbf{A_{ll}})^{-1}]$ is reducible and give a graphical interpretation of it.

\begin{example}
	Consider a linear SEM with three variables $V_1,V_2,$ and $V_3$ where $V_3=N_3$, $V_1=\alpha V_3+N_1$, and $V_2=\beta V_1+N_2$. Thus, the corresponding causal graph would be: $V_3\rightarrow V_1 \rightarrow V_2$. Suppose that $V_3$ is the only latent variable. Hence, $\mathbf{A_{ll}}=0$, $\mathbf{A_{ol}}=[\alpha;0]$, and $\mathbf{A_{ol}}(\mathbf{I}-\mathbf{A_{ll}})^{-1}=[\alpha;0]$ which is linearly dependent on the first column of $\mathbf{I}$. In fact, latent variable $V_3$ can be absorbed in variable $V_1$ by changing the exogenous noise of $V_1$ from $N_1$ to $N_1+\alpha N_3$. Thus, the number of variables in this model cannot be identified uniquely in this model.
	\label{ex:3new}
\end{example}

\begin{definition}(Absorbing)
	Variable $V_i$ is said to be absorbed in variable $V_j$ if the exogenous noise of $V_i$ is set to zero $N_i\leftarrow 0$, and the exogenous noise of $V_j$ is replaced by $N_j\leftarrow N_j+[\mathbf{B}]_{j,i}N_i$. We define absorbing a variable in $\emptyset$ by setting its exogenous noise to zero.
\end{definition}

\begin{definition}(Absorbablity)
	Let $P'_{V_o}$ be the joint distribution of the observed variables after absorbing $V_i$ in $V_j$. We say $V_i$ is absorbable in $V_j$ if $P'_{V_o}=P_{V_o}$.
\end{definition}

 
 The following theorem characterizes the graphical conditions where a latent variable is absorbable. The proof of theorem is given in Appendix A.

\begin{theorem}~\\
	(a) A latent variable is absorbable in $\emptyset$ if and only if it has no observable descendant.\\
	(b) A latent variable $V_j$ is absorbable in variable $V_i$ (observed or latent), if and only if all paths from $V_j$ to its observable
	descendants go through $V_i$.
	\label{lemma:absorb}
\end{theorem}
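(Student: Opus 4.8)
The plan is to prove both directions of each equivalence by reasoning about how the mixing matrix $\mathbf{B'}$ in Equation \eqref{eq:mainp} changes under absorbing, relying on the observation that two SEMs generate the same observed distribution if and only if (up to the overcomplete-ICA ambiguities of Proposition \ref{prop:id}) they induce the same reduced mixing matrix $\mathbf{B''}$ on the observed variables. The central computational fact I would establish first is that absorbing $V_j$ in $V_i$ amounts to deleting the column $[\mathbf{B'}]_{:,j}$ and folding $[\mathbf{B}]_{i,j}$ of it into column $[\mathbf{B'}]_{:,i}$; hence the absorbed model matches $P_{V_o}$ precisely when column $[\mathbf{B'}]_{:,j}$ is a scalar multiple of $[\mathbf{B'}]_{:,i}$ (or, in part (a), when $[\mathbf{B'}]_{:,j}$ restricted to the observed rows is the zero vector). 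This converts the distributional claim of absorbability into the linear-dependence statements that the theorem's graphical conditions must match.

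For part (a), I would compute the $j$-th column of $\mathbf{B'}$ restricted to the observed coordinates: its $i$-th entry is the total causal effect of the latent $V_j$ on the observed $V_i$, i.e. a sum of path contributions $[\mathbf{B}]_{i,j}$. By the faithfulness assumption (Assumption \ref{assu:fauith}) this entry is nonzero exactly when there is a directed path from $V_j$ to $V_i$. Thus the observed part of column $[\mathbf{B'}]_{:,j}$ vanishes if and only if $V_j$ has no observable descendant, which by the column-deletion criterion is exactly when $V_j$ is absorbable in $\emptyset$. The forward direction here needs the faithfulness hypothesis to rule out accidental cancellation of path effects; the reverse direction is immediate from the definition of $\mathbf{B'}$.

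For part (b), I would again read off the observed rows of the two columns $[\mathbf{B'}]_{:,j}$ and $[\mathbf{B'}]_{:,i}$ and ask when they are proportional. For any observed descendant $V_m$ of $V_j$, every directed path from $V_j$ to $V_m$ can be decomposed according to whether it passes through $V_i$; the entries $[\mathbf{B}]_{m,j}$ and $[\mathbf{B}]_{m,i}$ therefore factor through the path sums. If all $V_j \rightsquigarrow V_m$ paths go through $V_i$, then each such path is a concatenation of a $V_j \rightsquigarrow V_i$ segment with a $V_i \rightsquigarrow V_m$ segment, giving $[\mathbf{B}]_{m,j} = [\mathbf{B}]_{j,i}^{*}\,[\mathbf{B}]_{m,i}$ for the common factor arising from the $V_j \rightsquigarrow V_i$ effect, so the observed part of column $j$ is a fixed scalar multiple of column $i$ and absorbability follows. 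Conversely, if some $V_j \rightsquigarrow V_m$ path avoids $V_i$, I would exhibit a second observed descendant (or use $V_i$ itself when observed) for which the required proportionality fails, invoking faithfulness to guarantee the offending entries do not cancel.

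The main obstacle I anticipate is the converse of part (b): establishing that a single path bypassing $V_i$ genuinely breaks proportionality of the two columns across \emph{all} observed rows simultaneously, rather than merely at one coordinate that might happen to be compensated elsewhere. Handling this cleanly requires a careful choice of two observed descendants whose effect-ratios differ, together with the faithfulness assumption to preclude degenerate cancellations among the path weights; I would organize the argument by first treating the case where $V_i$ itself is observed (so it supplies one of the distinguishing rows) and then reducing the latent-$V_i$ case to it by tracking paths that continue past $V_i$ to reach the observed variables.
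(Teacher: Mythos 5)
Your proposal follows essentially the same route as the paper's proof in Appendix A: reduce absorbability to vanishing/proportionality of the relevant columns of the mixing matrix $[\mathbf{I}\,|\,\mathbf{A_{ol}}(\mathbf{I}-\mathbf{A_{ll}})^{-1}]$, prove the forward directions via the path-concatenation identity $[\mathbf{B}]_{m,j}=[\mathbf{B}]_{i,j}[\mathbf{B}]_{m,i}$ (the paper just splits (b) into observed-target and latent-target subcases), and refute the converses by exhibiting a residual path weight $\gamma\neq 0$ via faithfulness. The obstacle you anticipate in the converse of (b) is not actually one: since the absorbing operation fixes the proportionality constant to be the total effect $[\mathbf{B}]_{i,j}$, a single observed row where $[\mathbf{B}]_{m,j}\neq[\mathbf{B}]_{i,j}[\mathbf{B}]_{m,i}$ --- namely the $V_m$ reached by a path avoiding $V_i$, where the discrepancy is exactly $\gamma$ --- already suffices, which is precisely how the paper argues.
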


\begin{example}
	Consider a linear SEM  with corresponding causal graph in Figure \ref{fig:ex3} where $V_1$ and $V_2$ are the only observed variables. $V_7$ satisfies condition (a) and its exogenous noise can be set to zero. Furthermore, $V_3$ and $V_4$ satisfy condition (b) with respect to $V_5$  and they can be absorbed in $V_5$ by setting the exogenous noise of $V_5$ to $N_5+(\alpha\gamma+\beta)N_3+\gamma N_4$.  Finally, $V_6$ satisfies condition (b) and it can be absorbed in $V_2$. Note that $V_8$ and $V_5$ cannot be absorbed in $V_1$ or $V_2$.
	\begin{figure}
		\centering
		\includegraphics[width=0.3\textwidth]{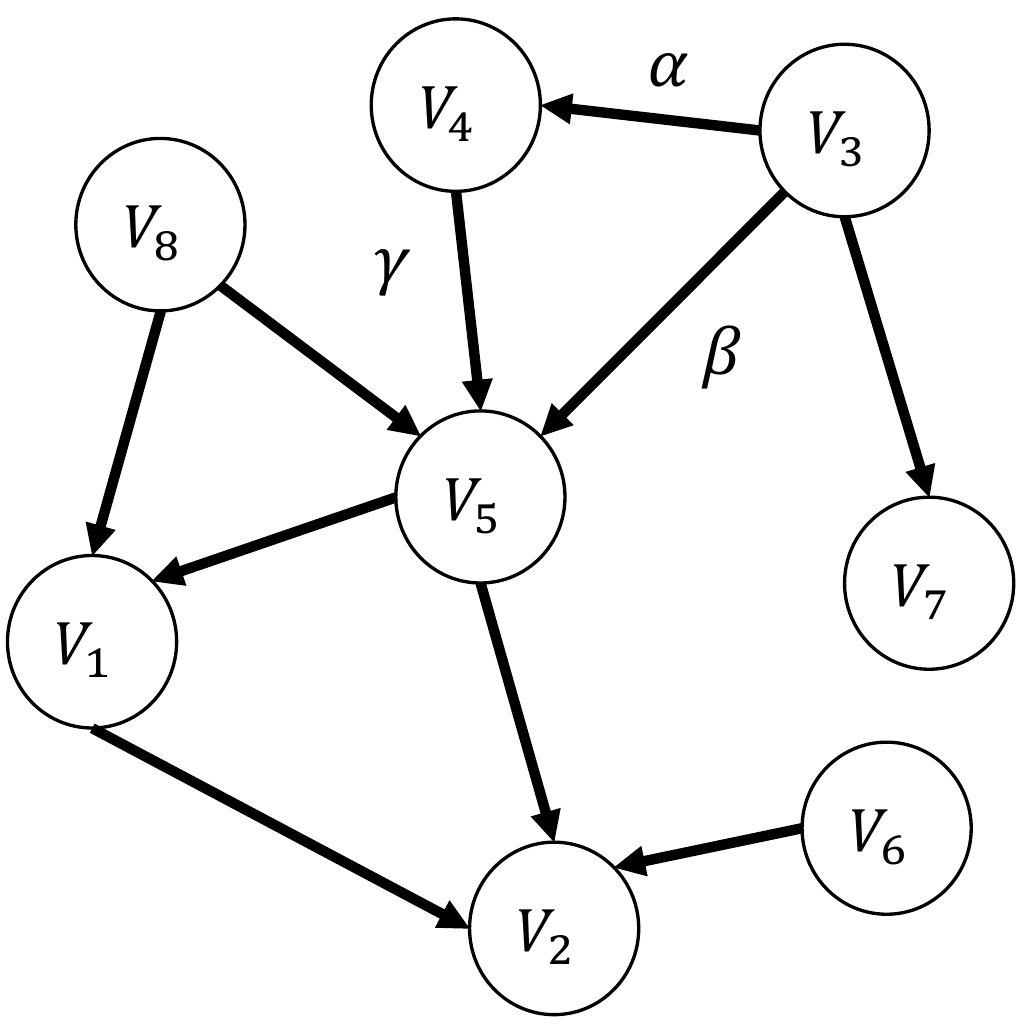}
		\caption{Causal graph of Example \ref{ex:3}. $V_1$ and $V_2$ are the only observed variables. }
		\label{fig:ex3}
	\end{figure} 
	\label{ex:3}
\end{example}


\begin{definition}
	We say a causal graph is minimal if none of its variables are absorbable.	
\end{definition}

Based on above definition, a causal graph is minimal if none of the latent variables satisfy the conditions in Theorem \ref{lemma:absorb}.  
We borrowed the terminology of minimal causal graphs from \cite{pearl1988probabilistic} for polytree causal structures. In \citep{pearl1988probabilistic}, a casual graph is called minimal if it has no redundant latent variables in the sense that the joint distribution without latent variables  remains a connected tree. Later, \cite{etesami2016learning} showed that in minimal latent directed information polytrees, each node has at least two children. The following lemma asserts that the same argument holds true for the non-absorbable latent variables in our setting. The proof of lemma is given in Appendix B.

\begin{lemma}
	A latent variable is non-absorbable if it has at least two non-absorbable children. 
	\label{lemma:minimal}
\end{lemma}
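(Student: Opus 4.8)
The plan is to prove the implication directly: assuming $V_j$ is a latent variable with two \emph{distinct} non-absorbable children $C_1$ and $C_2$, I will show that $V_j$ satisfies neither condition (a) nor condition (b) of Theorem \ref{lemma:absorb}, and therefore is non-absorbable. The preparatory step is to distill from Theorem \ref{lemma:absorb} a single uniform property of any non-absorbable variable $C$, which I will invoke repeatedly; call it property ($\star$): $C$ has at least one observable descendant, and for every variable $V_i\neq C$ there is a directed path from $C$ to some observable descendant of $C$ that does not pass through $V_i$. For a latent non-absorbable $C$, the first half of ($\star$) is exactly the failure of (a), while the second half is exactly the failure of (b) for each target $V_i$ (note that ``not all paths go through $V_i$'' is literally the existence of one path avoiding $V_i$, with the endpoint automatically differing from $V_i$). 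For an observed child $C$, property ($\star$) holds trivially, since $C$ is its own observable descendant and the length-zero path avoids every $V_i\neq C$.

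With ($\star$) in hand, condition (a) is quickly dispatched: since $C_1$ is non-absorbable it has an observable descendant $V_m$, so $V_j\rightarrow C_1\rightsquigarrow V_m$ exhibits an observable descendant of $V_j$, and by Theorem \ref{lemma:absorb}(a) the variable $V_j$ is not absorbable in $\emptyset$. The substance of the argument is then to show that condition (b) fails for \emph{every} candidate target $V_i\neq V_j$; that is, for each such $V_i$ I must produce a directed path from $V_j$ to an observable descendant that avoids $V_i$. The routing idea is to pass through whichever child is different from $V_i$: if $V_i\neq C_1$, apply ($\star$) to $C_1$ to obtain a path from $C_1$ to an observable descendant avoiding $V_i$, and prepend the edge $V_j\rightarrow C_1$; if instead $V_i=C_1$, then $V_i\neq C_2$, so apply ($\star$) to $C_2$ and prepend $V_j\rightarrow C_2$.

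The hard part, and the reason the hypothesis demands \emph{two} non-absorbable children rather than one, is precisely the case $V_i=C_1$: with only a single child $C$, taking the target $V_i=C$ would force every path from $V_j$ to its observable descendants through $C$, leaving open the possibility that $V_j$ is absorbable in $C$; the second non-absorbable child is exactly the escape route that rules this out. The remaining care is the bookkeeping on the prepended edge, namely verifying that prepending $V_j\rightarrow C_k$ does not reintroduce $V_i$ onto the path. This holds because $V_j\neq V_i$ (we only consider $V_i\neq V_j$), because $C_k\neq V_i$ by the case choice, and because $V_j\neq C_k$ since a child cannot coincide with its parent in a DAG; thus the concatenated path still avoids $V_i$ and still ends at an observable descendant of $V_j$. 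Having shown that neither (a) holds nor (b) holds for any target $V_i$, Theorem \ref{lemma:absorb} yields that $V_j$ is non-absorbable, completing the proof.
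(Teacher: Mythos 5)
Your proof is correct, and it reaches the same destination as the paper's --- showing that the parent fails both graphical conditions of Theorem \ref{lemma:absorb} --- but it is organized quite differently. The paper splits into three cases according to whether the two non-absorbable children are both observed, both latent, or mixed, and argues each case separately; your proof replaces that case split with the single uniform property $(\star)$, which packages non-absorbability of \emph{any} child (observed or latent) as ``for every candidate target $V_i$ there is an escape path to an observable descendant avoiding $V_i$,'' and then routes through whichever child differs from the target. This buys two things: it removes the type-of-child case analysis entirely, and it is more explicit than the paper about the universal quantification over absorption targets --- the paper's middle case, for instance, only asserts that ``all of latent paths starting from $V_i$ does not go through only one latent variable'' without exhibiting, for an arbitrary target, the witnessing path that your construction supplies. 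Your identification of why \emph{two} children are needed (the target $V_i = C_1$ is exactly the case a single child cannot handle) is also a point the paper leaves implicit. The only cost is the small amount of bookkeeping you already carry out: that prepending $V_j \rightarrow C_k$ preserves avoidance of $V_i$ (using $V_j \neq V_i$, $C_k \neq V_i$, and acyclicity), and that an observed child satisfies $(\star)$ trivially as its own observable descendant.
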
 

Next theorem gives necessary and sufficient graphical conditions for non-reduciblity of the matrix $\mathbf{B'}$. The proof of theorem is given in Appendix C.

\begin{theorem}
	The matrix $\mathbf{B'}$ is not reducible almost surely if and only if the corresponding causal graph $G$ is minimal.
	\label{cond:eq}
\end{theorem}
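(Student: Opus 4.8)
The plan is to prove the biconditional in Theorem~\ref{cond:eq} by contraposition in both directions, using the algebraic characterization already established in the text: $\mathbf{B'}$ is reducible if and only if two columns of $[\mathbf{I}_{p_o\times p_o}\,|\,\mathbf{A_{ol}}(\mathbf{I}-\mathbf{A_{ll}})^{-1}]$ are linearly dependent. The key observation is that column $i$ of this matrix is exactly the vector $[\mathbf{B'}]_{:,i}$ of total causal effects of variable $V_i$ onto the observed variables. So the whole argument reduces to understanding when two such effect-vectors become parallel, and tying that to the absorbability conditions of Theorem~\ref{lemma:absorb}.

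For the direction ``not minimal $\Rightarrow$ reducible,'' I would start from a graph that is \emph{not} minimal, so some latent variable $V_i$ is absorbable. If $V_i$ is absorbable in $\emptyset$, then by Theorem~\ref{lemma:absorb}(a) it has no observable descendant, so $[\mathbf{B'}]_{:,i}=\mathbf{0}$ and the matrix is trivially reducible (a zero column is dependent on any column). If instead $V_i$ is absorbable in some $V_j$, then by Theorem~\ref{lemma:absorb}(b) every path from $V_i$ to an observable descendant passes through $V_j$; I would argue that this forces $[\mathbf{B'}]_{:,i}=\alpha\,[\mathbf{B'}]_{:,j}$ for the scalar $\alpha=[\mathbf{B}]_{j,i}$, since the total effect of $V_i$ on any observed $V_k$ factors through the total effect on $V_j$. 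Hence the two columns are linearly dependent and $\mathbf{B'}$ is reducible.

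The harder and more interesting direction is ``reducible $\Rightarrow$ not minimal.'' Here I assume two columns of $[\mathbf{I}\,|\,\mathbf{A_{ol}}(\mathbf{I}-\mathbf{A_{ll}})^{-1}]$ are linearly dependent and must exhibit an absorbable latent variable. By Lemma~\ref{lemma:obsred} two columns corresponding to observed variables are never dependent, so at least one of the dependent columns, say column $i$, corresponds to a latent variable. If its column is the zero vector, then $V_i$ has no nonzero effect on any observed variable, which by faithfulness (Assumption~\ref{assu:fauith}) means $V_i$ has no observable descendant, so $V_i$ is absorbable in $\emptyset$. Otherwise $[\mathbf{B'}]_{:,i}=\alpha\,[\mathbf{B'}]_{:,j}$ with $\alpha\neq 0$ and I must show this proportionality is equivalent to ``all paths from $V_i$ to observable descendants go through $V_j$,'' thereby invoking Theorem~\ref{lemma:absorb}(b). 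The main obstacle is precisely this step: linear dependence of effect-vectors is an \emph{almost sure} algebraic statement, and I must rule out that the proportionality arises from a numerical coincidence among generic edge weights rather than from a genuine graphical constraint. I expect to handle this by arguing on the support (zero/nonzero pattern) of the columns: under faithfulness the support of $[\mathbf{B'}]_{:,i}$ is exactly the set of observed descendants of $V_i$, so proportionality forces the observed descendants of $V_i$ to be a subset of those of $V_j$, and a more careful path-counting argument (comparing the ratios $[\mathbf{B'}]_{k,i}/[\mathbf{B'}]_{k,j}$ across observed descendants $V_k$) shows that equality of all these ratios can hold for generic weights only when every $V_i$-to-observed path routes through $V_j$. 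This ``almost surely'' qualifier is what makes the statement true generically and is where the genericity of the edge coefficients must be used explicitly to exclude measure-zero cancellations.
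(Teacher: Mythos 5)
Your overall architecture matches the paper's: the easy direction maps each absorbability case of Theorem~\ref{lemma:absorb} to a zero or proportional column of $[\mathbf{I}\,|\,\mathbf{A_{ol}}(\mathbf{I}-\mathbf{A_{ll}})^{-1}]$, and the hard direction argues that for generic weights a linear dependence can only arise from a graphical (absorbability) constraint. The easy direction and the zero-column sub-case are fine. But there are two problems with the hard direction as you have written it.

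First, the central step is asserted rather than proved. You write that ``a more careful path-counting argument (comparing the ratios $[\mathbf{B'}]_{k,i}/[\mathbf{B'}]_{k,j}$ across observed descendants) shows that equality of all these ratios can hold for generic weights only when every $V_i$-to-observed path routes through $V_j$.'' That sentence is essentially a restatement of the theorem, and it is exactly the part that carries the ``almost surely'' qualifier. The paper discharges it by induction on $h_i$, the maximum length of a latent path out of $V_{p_o+i}$: it isolates a single edge weight $[\mathbf{A}]_{p_o+u,p_o+i}$ (for a child $V_{p_o+u}$ whose paths avoid $V_{p_o+j}$), rewrites the proportionality condition $[\mathbf{B}]_{k,p_o+j}/[\mathbf{B}]_{l,p_o+j}=[\mathbf{B}]_{k,p_o+i}/[\mathbf{B}]_{l,p_o+i}$ as an affine equation in that weight, and shows the coefficient $[\mathbf{B}]_{k,p_o+u}[\mathbf{B}]_{l,p_o+j}-[\mathbf{B}]_{k,p_o+j}[\mathbf{B}]_{l,p_o+u}$ is nonzero with measure one by the induction hypothesis, so the equation cuts out a measure-zero set. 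Without some concrete mechanism of this kind (an explicit nontrivial polynomial in the free parameters), your argument does not establish the generic statement; support-pattern reasoning alone only gives you containment of descendant sets, not the impossibility of accidental proportionality.

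Second, your reduction of the proportional case to ``all paths from $V_i$ to observable descendants go through $V_j$'' is not correct when both dependent columns are latent and each has a single nonzero entry. Take latent $V_i$ and $V_j$ each with a single latent path into the same observed variable $V_1$ and no path between them: the two columns are scalar multiples of the same standard basis vector, hence dependent, yet no path from $V_i$ passes through $V_j$. The graph is still non-minimal, but because both latent variables are absorbable into the \emph{observed} variable $V_1$ (Theorem~\ref{lemma:absorb}(b) with $V_1$ as the absorber), not into each other. The paper treats this as a separate sub-case. You need the analogous case split; otherwise the claimed equivalence you plan to invoke is false and the contradiction with minimality would not follow from the argument as stated.
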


\begin{corollary}
	Under faithfulness assumption and non-Gaussianity of exogenous noises, the number of variables in the system is identifiable almost surely if the corresponding graph is minimal.	
\end{corollary}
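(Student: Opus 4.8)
The plan is to obtain the corollary by chaining the two structural results already established, namely Theorem~\ref{cond:eq} and Proposition~\ref{prop:id}; no new machinery is required. The starting observation is that Equation~\eqref{eq:mainp} expresses the observed vector as $\mathbf{V_o}=\mathbf{B'}\mathbf{N}$, an over-complete ICA model whose independent components are the exogenous noises in $\mathbf{N}$. Because these noises are non-Gaussian, the ICA identifiability behind Proposition~\ref{prop:id} applies. The key bookkeeping fact is that the number of columns of $\mathbf{B'}$ is exactly the number of exogenous noises, which equals the total number of variables $p=p_o+p_l$; hence identifying this count is the same as identifying $p$ (equivalently $p_l$, since $p_o$ is observed).

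First I would invoke Theorem~\ref{cond:eq}: since the causal graph $G$ is assumed minimal, the matrix $\mathbf{B'}$ is not reducible almost surely, i.e.\ no two of its columns are linearly dependent for almost every choice of the edge weights in $\mathbf{A}$. Note that only the ``if'' direction of Theorem~\ref{cond:eq} is needed here. Next I would apply Proposition~\ref{prop:id} to the over-complete ICA model: non-reducibility of the mixing matrix guarantees that its columns are identifiable from the observed distribution up to scaling and permutation. In particular, the recovered set of columns is in one-to-one correspondence with the true columns of $\mathbf{B'}$, so their cardinality is determined by the observational distribution.

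Finally I would translate column count back to variable count: each recovered column corresponds to precisely one exogenous noise $N_i$, hence to one variable $V_i$, so the number of columns equals $p$. Combining the steps, minimality of $G$ forces $\mathbf{B'}$ to be non-reducible almost surely, which in turn makes $p$ recoverable almost surely, establishing the claim.

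The main obstacle is conceptual rather than computational: I must argue that non-reducibility genuinely rules out both under- and over-counting of components. Under-counting is excluded because distinct, linearly independent columns cannot be merged (absorbed) without altering the mixing matrix, while over-counting is excluded because Proposition~\ref{prop:id} returns exactly the columns of $\mathbf{B'}$ up to scaling and permutation, with no spurious additional components consistent with the same distribution. I would also state the qualifier \emph{almost surely} explicitly, since it is inherited directly from Theorem~\ref{cond:eq}, where non-reducibility can fail only on a measure-zero set of edge-weight configurations; the faithfulness assumption is used implicitly throughout, as it underlies the reducibility analysis of $\mathbf{B'}$ via Lemma~\ref{lemma:faithfulness}.
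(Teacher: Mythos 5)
Your proposal is correct and follows essentially the same route as the paper: apply Theorem~\ref{cond:eq} to get non-reducibility of $\mathbf{B'}$ from minimality, then Proposition~\ref{prop:id} to recover the columns (hence their count, which equals the number of variables). The extra bookkeeping you add about columns corresponding to exogenous noises is a harmless elaboration of the same two-step argument.
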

\noindent
{\bf Proof}.
	Based on Theorem \ref{cond:eq}, we know that the matrix $\mathbf{B'}$ is not reducible almost surely if the corresponding causal graph $G$ is minimal. Furthermore, according to Proposition \ref{prop:id}, the number of variables in the systems is identifiable if matrix $\mathbf{B'}$ is not reducible. This completes the proof.
\hfill\BlackBox\\

\section{Identifying Total Causal Effects among Observed Variables}

In this section, first, we will show by an example that total causal effects among observed variables cannot be identified uniquely under the faithfulness assumption and non-Gaussianity of exogenous noises\footnote{This example has also been studied in \citep{hoyer2008estimation}.}. However, we can obtain all the possible solutions. Furthermore, under some additional assumptions on linear SEM, we show that one can uniquely identify total causal effects among observed variables. 

\subsection{Example of non-Uniqueness of Total Causal Effects}
\label{sec:example}

Consider causal graph in Figure \ref{fig:example} where $V_i$ and $V_j$ are observed variables and $V_k$ is latent variable. The direct causal effects from $V_k$ to $V_i$, from $V_k$ to $V_j$, and from $V_i$ to $V_i$ are $\alpha$, $\gamma$, and $\beta$, respectively.
We can write $V_i$ and $V_j$ based on the exogenous noises of their ancestors as follows:
\begin{align}
\begin{split}
V_i &= \alpha N_k+ N_i,\\
V_j &= \beta N_i +(\alpha\beta+\gamma) N_k + N_j.
\label{eq:examplesimple}
\end{split}
\end{align}

Now, we construct a second causal graph depicted in Figure \ref{fig:example} where the exogenous noises of variables $V_i$ and $V_k$ are changed to $\alpha N_k$ and $N_i$, respectively. Furthermore, we set the direct causal effects from $V_k$ to $V_i$, from $V_k$ to $V_j$, and from $V_i$ to $V_j$ to $1$, $-\gamma/\alpha$, and $\beta+(\gamma/\alpha)$, respectively.
It can be seen that equations in \eqref{eq:examplesimple} do not change while the direct causal effect from $V_i$ to $V_j$ becomes $\beta+(\gamma/\alpha)$ in the second causal graph. Thus, we cannot identify causal effect from $V_i$ to $V_j$ merely by observational data from $V_i$ and $V_j$. In Appendix D, we extend this example to the case where there might be multiple latent variables on the path from $V_k$ to $V_i$ and $V_j$, and from $V_i$ to $V_j$.

\begin{figure}
	\centering
	\includegraphics[width=0.6\textwidth]{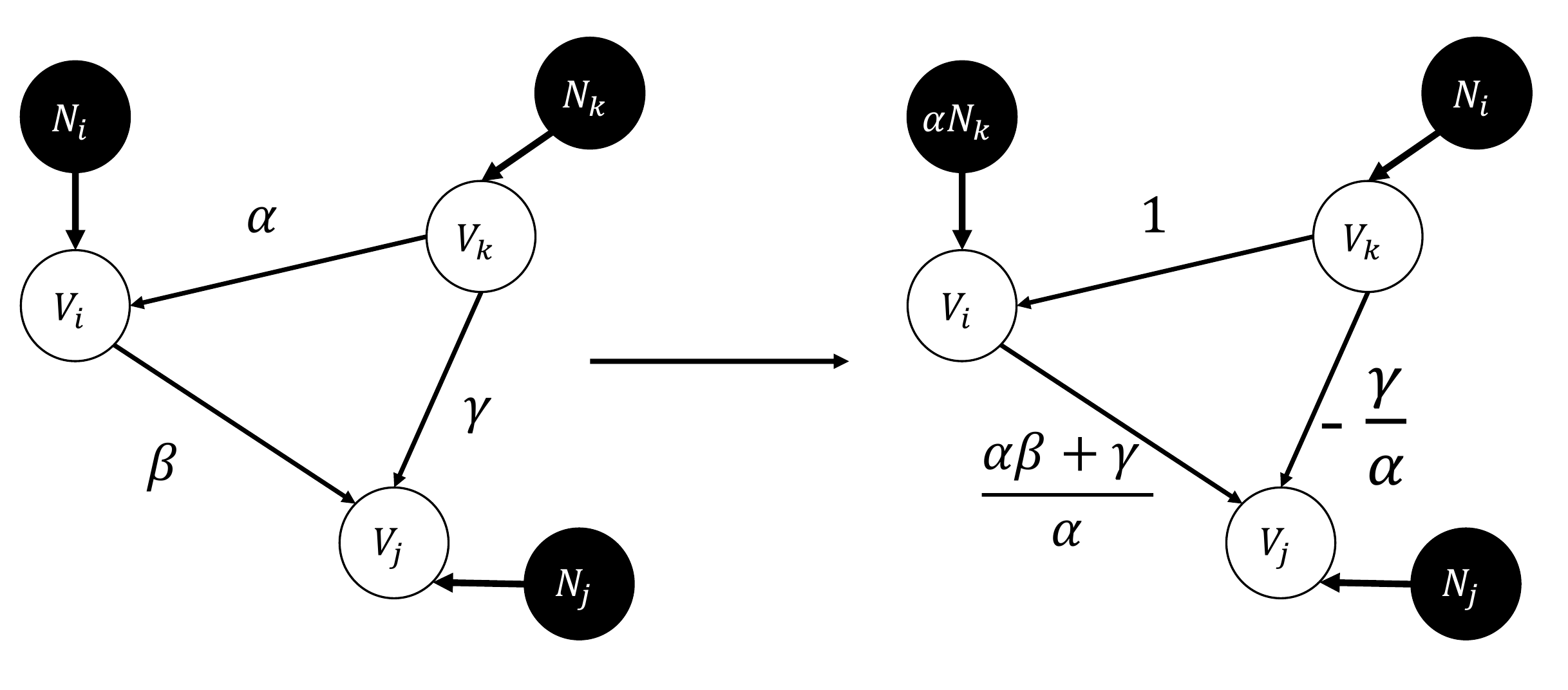}
	\caption{An example of non-identifiability of causal effects from observed variable $V_i$ to observed variable $V_j$. }\label{fig:example}
\end{figure}

The above example shows that causal effects may not be identified even by assuming non-Gaussianity of exogenous noises if we have some latent variables in the system. In the following, we first show that the set of all possible total causal effects can be identified. Afterwards, we will present a set of structural conditions under which we can uniquely identify total causal effects among observed variables.

\subsection{Identifying the Set of All Possible Total Causal Effects}


Since the subgraph corresponding to $\mathbf{A_{ll}}$ is a DAG, there exists an integer $d_l$ such that $\mathbf{A_{ll}}^{d_l}=0$. Hence, we can rewrite matrix $\mathbf{D}$ given in \eqref{eq:mainp} as follows
\begin{equation}
\mathbf{D}=\mathbf{A_{oo}}+\sum_{k=0}^{d_l-1}\mathbf{A_{ol}}\mathbf{A_{ll}}^k\mathbf{A_{lo}}.
\end{equation}

\begin{lemma}
	Matrix $\mathbf{D}$ in \eqref{eq:mainp} can be converted to a strictly lower triangular matrix by permuting columns and rows simultaneously based on the causal order $k_o$.
	\label{lemma:1}
\end{lemma}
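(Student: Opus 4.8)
The plan is to reduce the claim to a statement about the support (the zero/nonzero pattern) of $\mathbf{D}$. Permuting the rows and columns of $\mathbf{D}$ simultaneously according to $k_o$ yields a strictly lower triangular matrix precisely when $[\mathbf{D}]_{i,j}=0$ for every pair of observed indices $1\le i,j\le p_o$ with $k_o(i)\le k_o(j)$. Hence it suffices to prove the implication: if $[\mathbf{D}]_{i,j}\neq 0$ then $k_o(i)>k_o(j)$. I would establish this by showing that a nonzero entry of $\mathbf{D}$ forces a directed path from $V_j$ to $V_i$ in $G$, and then invoking acyclicity.

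The combinatorial core is to read off each term of $\mathbf{D}=\mathbf{A_{oo}}+\sum_{k=0}^{d_l-1}\mathbf{A_{ol}}\mathbf{A_{ll}}^k\mathbf{A_{lo}}$ as a family of directed walks. For the first term, $[\mathbf{A_{oo}}]_{i,j}\neq 0$ is exactly the edge $V_j\to V_i$. For a fixed $k$, expanding the product gives $[\mathbf{A_{ol}}\mathbf{A_{ll}}^k\mathbf{A_{lo}}]_{i,j}=\sum [\mathbf{A_{ol}}]_{i,m_k}[\mathbf{A_{ll}}]_{m_k,m_{k-1}}\cdots[\mathbf{A_{ll}}]_{m_1,m_0}[\mathbf{A_{lo}}]_{m_0,j}$, summed over latent indices $m_0,\dots,m_k$. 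Every nonzero summand corresponds to the edge $V_j\to V_{m_0}$, a chain of latent-to-latent edges $V_{m_0}\to\cdots\to V_{m_k}$, and the edge $V_{m_k}\to V_i$, i.e. a directed walk from $V_j$ to $V_i$ whose intermediate vertices are all latent. Consequently, if $[\mathbf{D}]_{i,j}\neq 0$ then at least one of these summands is nonzero, so such a directed walk exists; in the DAG $G$ it is in particular a directed path, giving $V_j\rightsquigarrow V_i$.

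It then remains to translate this into the order condition. Since $G$ is acyclic, a directed path $V_j\rightsquigarrow V_i$ with $i\neq j$ forces $k(j)<k(i)$, and by the definition of the induced order this yields $k_o(j)<k_o(i)$, i.e. $k_o(i)>k_o(j)$ as required. The diagonal is handled by the same argument: $[\mathbf{D}]_{i,i}\neq 0$ would produce a directed path from $V_i$ back to itself through a (possibly empty) set of latent vertices, i.e. a directed cycle, contradicting acyclicity; so the diagonal of $\mathbf{D}$ vanishes and the permuted matrix is \emph{strictly} lower triangular. I expect the only delicate point to be the bookkeeping in the second paragraph, namely arguing that the implication ``$[\mathbf{D}]_{i,j}\neq 0\Rightarrow$ a directed walk exists'' is immune to numerical cancellation: a nonzero overall value cannot arise unless some individual walk term is nonzero, whereas cancellation among several paths can only turn a nonzero entry into zero, which is harmless for a triangularity claim.
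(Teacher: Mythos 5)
Your proof is correct and follows essentially the same route as the paper's: both expand $\mathbf{D}=\mathbf{A_{oo}}+\sum_{k}\mathbf{A_{ol}}\mathbf{A_{ll}}^k\mathbf{A_{lo}}$, identify a nonzero entry with a directed path through latent intermediates, and conclude from the causal order (acyclicity) that the entry must lie strictly below the diagonal after permutation. Your explicit handling of the diagonal and of the no-cancellation point is slightly more careful than the paper's, but it is the same argument.
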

\noindent
{\bf Proof}.
	Let $\mathbf{P}$ be the permutation matrix corresponding the causal order $k_o$. We want to show that $\mathbf{P}\mathbf{D}\mathbf{P}^T$ is strictly lower triangular. It suffices to prove  $\mathbf{P}\mathbf{A_{ol}}\mathbf{A_{ll}}^k\mathbf{A_{lo}}\mathbf{P}^T$ is strictly lower triangular for any $0\leq k\leq d_l-1$. Suppose that there exists a nonzero entry, $(i,j)$, in $\mathbf{P}\mathbf{A_{ol}}\mathbf{A_{ll}}^k\mathbf{A_{lo}}\mathbf{P}^T$ where $j\geq i$. Then, there should be a directed path from observed variable $V_{k^{-1}_o(j)}$ to $V_{k_o^{-1}(i)}$ of length $k+2$ through latent variables in the causal graph where $k_o^{-1}(i)$ is the index of an observed variable whose order is $i$ in the causal order $k_o$. This means variable $V_{k_o^{-1}(j)}$ should come before variable $V_{k_o^{-1}(i)}$ in any causal order. But this violates the causal order $k_o$. 
\hfill\BlackBox\\

Previously, we showed that existence of a causal path between any two observed variables $V_i$ and $V_j$ can be determined by performing over-complete ICA. Let $des_o(V_i)$ be the set of all observed descendants of $V_i$, i.e., $des_o(V_i)=\{V_j|V_i\rightsquigarrow V_j, 1\leq j \leq p_o\}$. We will utilize $des_o(V_i)$'s to enumerate all possible total causal effects among the observed variables.

\begin{remark}
	From Lemma \ref{lemma:obsred}, we have: $des_o(V_i)\neq des_o(V_j)$ for any $1\leq i,j \leq p_o$. 
	\label{remark:dec}
\end{remark}

As we discussed in Section \ref{section:path}, under non-Gaussianity of exogenous noises, the columns of $\mathbf{B''}$ can be determined up to some scalings and permutations by solving an overcomplete ICA problem. Let $p_r$ be the number of columns of $\mathbf{B''}$. Furthermore, without loss of generality, assume that variables $V_{p_o+1},V_{p_o+1},\cdots, V_{p_r}$ are the latent variables in the system whose corresponding columns remain in $\mathbf{B''}$.

\begin{theorem}
	Let $r_i:=|\{j:des_o(V_i)=des_o(V_j), 1 \leq j \leq p_r\}|$, for any $1\leq i \leq p_o$. Under faithfulness assumption and non-Gaussianity of exogenous noises, the number of all possible $\mathbf{D}$'s that can generate the same distribution for $\mathbf{V_o}$ according to \eqref{eq:main}, is equal to $\Pi_{i=1}^{p_o} r_i$.
	\label{th:identifycausal}
\end{theorem}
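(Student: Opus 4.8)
The plan is to reduce the counting of admissible $\mathbf{D}$ matrices to a counting of the ways one can designate, among the columns recovered by over-complete ICA, which $p_o$ of them play the role of the exogenous-noise channels $\mathbf{N_o}$ of the observed variables. By Proposition \ref{prop:id} together with non-Gaussianity, the distribution of $\mathbf{V_o}$ determines the columns of $\mathbf{B''}$ up to scaling and permutation, so every candidate model compatible with the data is obtained by re-partitioning the same (scaled) set of columns into an ``observed part'' $\hat{\mathbf{B}}_o$ and a ``latent part'' $\hat{\mathbf{B}}_l$. Since $\mathbf{B_o}=(\mathbf{I}-\mathbf{D})^{-1}$, once $\hat{\mathbf{B}}_o$ is fixed (and normalized to unit diagonal) we recover $\mathbf{D}=\mathbf{I}-\hat{\mathbf{B}}_o^{-1}$; hence counting admissible $\mathbf{D}$'s is the same as counting admissible choices of $\hat{\mathbf{B}}_o$.

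The key step is to characterize exactly which columns may be assigned to a given observed variable $V_i$. By faithfulness (Assumption \ref{assu:fauith} and Lemma \ref{lemma:faithfulness}), the observed coordinates of the $\mathbf{B''}$-column associated with any variable $V_m$ are nonzero precisely on $des_o(V_m)$; that is, the zero/nonzero pattern of a column restricted to the observed rows equals its set of observed descendants, and this pattern is directly readable from $\mathbf{\tilde{B}}''$. I would argue both inclusions. Any column with observed-support exactly $des_o(V_i)$ is admissible: since $V_i\in des_o(V_i)$ its $i$-th entry is nonzero (so normalization is possible), and since $des_o(V_i)$ consists only of variables appearing after $V_i$ in the identified order $k_o$, the resulting $\hat{\mathbf{B}}_o$ is automatically lower triangular with unit diagonal, whence $\hat{\mathbf{D}}=\mathbf{I}-\hat{\mathbf{B}}_o^{-1}$ is strictly lower triangular and $[\hat{\mathbf{B}}_o,\hat{\mathbf{B}}_l]$ is a rescaled column-permutation of $\mathbf{B''}$, i.e. it reproduces the same distribution. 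Conversely, any admissible $\mathbf{D}$ yields an $\hat{\mathbf{B}}_o$ whose columns are rescalings of $\mathbf{B''}$-columns, and whose $i$-th column support must equal $des_o(V_i)$, because both $k_o$ and the relations $des_o(\cdot)$ are recoverable from the observed distribution (Section \ref{section:path}).

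It then remains to count. Group the $p_r$ columns by their observed-support; by Remark \ref{remark:dec} each group contains at most one observed variable, so the $p_o$ observed variables lie in $p_o$ distinct groups, and the group of $V_i$ has cardinality $r_i$. Building $\hat{\mathbf{B}}_o$ amounts to selecting, independently for each $i$, one column from the group of $V_i$ to serve as its noise channel, the remaining columns of that group together with all columns of the groups containing no observed variable forming $\hat{\mathbf{B}}_l$. Because these groups are pairwise disjoint the selections do not interact, so the number of selections is $\prod_{i=1}^{p_o} r_i$. Distinctness of the resulting $\mathbf{D}$'s follows from the fact that the columns of $\mathbf{B''}$ are pairwise linearly independent (it is not reducible), so distinct selections give distinct normalized columns, hence distinct $\hat{\mathbf{B}}_o$ and distinct $\mathbf{D}$.

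The main obstacle I anticipate is the two-directional support argument of the second paragraph: one must show not merely that every support-consistent selection produces a valid, strictly-lower-triangular $\mathbf{D}$ generating the correct distribution, but also that no admissible $\mathbf{D}$ can employ a column whose observed-support strictly contains, or otherwise differs from, $des_o(V_i)$. Making this fully rigorous requires carefully combining faithfulness (to read descendant sets off column supports), the identifiability of both $k_o$ and $des_o(\cdot)$ from the observed distribution, and the triangular structure forced by $k_o$; the remaining bookkeeping and the product count are then routine.
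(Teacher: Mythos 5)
Your proposal is correct and follows essentially the same route as the paper: identify the columns of $\mathbf{B''}$ up to scaling and permutation via Proposition \ref{prop:id}, match candidate columns to each observed variable $V_i$ by the observed-support pattern $des_o(V_i)$, verify that each of the $\prod_{i=1}^{p_o} r_i$ selections yields a normalized $\hat{\mathbf{B}}_o$ that is lower triangular under the causal order $k_o$ (hence a valid strictly lower triangular $\mathbf{D}=\mathbf{I}-\hat{\mathbf{B}}_o^{-1}$ reproducing the distribution), and count the independent choices. The paper makes the realizability step explicit by setting $\mathbf{A_{lo}}=\mathbf{0}$, $\mathbf{A_{ll}}=\mathbf{0}$, $\mathbf{A_{oo}}=\mathbf{I}-\mathbf{B_o'}^{-1}$, $\mathbf{A_{ol}}=\mathbf{B_o'}^{-1}\mathbf{B'_l}$ and invoking Lemma \ref{lemma:1}, but this is the same argument you sketch.
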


\noindent
{\bf Proof}.
	 According to Proposition \ref{prop:id}, under non-Gaussianity of exogenous noises, the columns of $\mathbf{B''}$ can be determined up to some scalings and permutations by solving an overcomplete ICA problem.
	 Furthermore, for the column corresponding to the noise $N_i$, $1\leq i\leq p_o$, we have $r_i$ possible candidates with the same set of indicies of non-zero entries where all of them are pairwise linearly independent.  Let $\mathbf{B_o'}$ be a $p_o\times p_o$ matrix by selecting one of the candidates for each column corresponding to noise $N_i$, $1\leq i \leq p_o$. Thus, we have $\Pi_{i=1}^{p_o} r_i$  possible matrices\footnote{Please note that diagonal entries of $\mathbf{B_o'}$ should be equal to one. Otherwise we can normalize each column to its on-diagonal entry.}. Now, for each $\mathbf{B_o'}$, we just need to show that there exists an assignment for $\mathbf{A_{oo}}$, $\mathbf{A_{lo}}$, $\mathbf{A_{ol}}$, and $\mathbf{A_{ll}}$ such that they satisfy \eqref{eq:mainp} and  $\mathbf{A_{oo}}$ and  $\mathbf{A_{ll}}$ can be converted to strictly lower triangular matrices with some simultaneous permutations of columns and rows.
	
	Let $\mathbf{A_{lo}}=\mathbf{0}_{p_l\times p_o}$ and  $\mathbf{A_{ll}}=\mathbf{0}_{p_l\times p_l}$. Assume that $\mathbf{B'_l}$ consists of the remaining columns which are not in $\mathbf{B_o'}$. We also add columns corresponding to latent absorbed variables to $\mathbf{B_l'}$.  Now, we set $\mathbf{A_{oo}}$ and $\mathbf{A_{ol}}$ to $\mathbf{I}-\mathbf{B_o'}^{-1}$ and $\mathbf{B_o'}^{-1}\mathbf{B'_l}$, respectively. By these assignments, the proposed matrix $\mathbf{A}=[\mathbf{A_{oo}},\mathbf{A_{ol}};\mathbf{A_{lo}},\mathbf{A_{ll}}]$ satisfies in \eqref{eq:mainp}. Thus, we just need to show that $\mathbf{I}-\mathbf{B_o'}^{-1}$ can be converted to a strictly lower triangular matrix by some permutations. To do so, first note that from Lemma \ref{lemma:1}, we know that matrix $\mathbf{D}$ can be converted to a strictly lower triangular matrix by a permutation matrix $\mathbf{P}$. Furthermore, based on this property of matrix $\mathbf{D}$, we have: $\mathbf{D}^{p_o}=\mathbf{0}$. Thus, we can write:
	\begin{equation*}
	\mathbf{P}(\mathbf{I}-\mathbf{D})^{-1}\mathbf{P}^T=\sum_{k=0}^{p_o-1} \mathbf{P}\mathbf{D}^k\mathbf{P}^T =\sum_{k=0}^{p_o-1} (\mathbf{P}\mathbf{D}\mathbf{P}^T)^k.
	\end{equation*}
	
	Since the matrix $(\mathbf{P}\mathbf{D}\mathbf{P}^T)^k$ is a lower triangular matrix for any $k\geq 0$, $(\mathbf{I}-\mathbf{D})^{-1}$ can be converted to a lower triangular matrix by permutation matrix $\mathbf{P}$. Furthermore, the set of nonzero entries of $\mathbf{B'_o}$ is the same as the one of $(\mathbf{I}-\mathbf{D})^{-1}$. Thus, $\mathbf{P}\mathbf{B_o'}\mathbf{P}^T$ is also a lower triangular matrix where all diagonal elements of it are equal to one. Hence, we can write $\mathbf{B_o'}$ in the form of $\mathbf{B_o'}=\mathbf{I}+\mathbf{B_o''}$ where $\mathbf{P}\mathbf{B_o''}\mathbf{P}^T$ is a strictly lower triangular matrix. Therefore, we have:
	\begin{equation}
	\mathbf{P}(\mathbf{I}-\mathbf{B_o'}^{-1})\mathbf{P}^T=\mathbf{P}(\mathbf{I}-\sum_{k=0}^{p_o-1} (-1)^k \mathbf{B_o''}^k)\mathbf{P}^T=\mathbf{P}(\sum_{k=1}^{p_o-1} (-1)^{k+1} \mathbf{B_o''}^k)\mathbf{P}^T,
	\end{equation}
	where the last term shows that $\mathbf{I}-\mathbf{B_o'}^{-1}$ can be converted to a strictly lower triangular matrix and the proof is complete. 
	%
\hfill\BlackBox\\

Comparing our results with \citep{hoyer2008estimation}, we can obtain all sets $des_o(V_i)$'s and determine which columns can be selected as corresponding columns of observed variables in $O(p_o^2p_r)$ and then enumerate all the possible total causal effects while the proposed algorithm in \citep{hoyer2008estimation} requires to search a space of ${p_r \choose p_o}$ different possible choices. Moreover, we can identify a causal order uniquely with the same time complexity by utilizing the method proposed in Section \ref{section:path}.

\subsection{Unique Identification of Causal Effects under Structural Conditions}
\label{sec:UniIde}

Based on Theorem \ref{th:identifycausal}, in this part, we propose a method to identify total causal effects uniquely under some structural conditions.

\begin{assumption}
	Assume that for any observed variables $V_i$ and any latent variable $V_k$, we have: $des_o(V_k)\neq des_o(V_i) $.
	\label{assu:dec}
\end{assumption}

Assumption \ref{assu:dec} is a very natural condition that one expects to hold for unique identifiability of causal effects. This is because if Assumption \ref{assu:dec} fails, then based on Theorem \ref{th:identifycausal}, there are multiple sets of total causal effects that are compatible with the observed data. 




\begin{theorem}
	Under Assumptions \ref{assu:fauith}-\ref{assu:dec}, and non-Gaussianity of exogenous noises, the total causal effect between any two observed variables can be identified uniquely. 
	\label{theorem:causal_strength}
\end{theorem}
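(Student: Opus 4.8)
The plan is to derive this theorem as a direct consequence of the counting formula in Theorem \ref{th:identifycausal}. That theorem states that the number of distinct matrices $\mathbf{D}$ compatible with the observed distribution of $\mathbf{V_o}$ equals $\Pi_{i=1}^{p_o} r_i$, where $r_i = |\{j : des_o(V_i) = des_o(V_j),\ 1 \le j \le p_r\}|$. Since the total causal effects among the observed variables are read off from $\mathbf{B_o} = (\mathbf{I}-\mathbf{D})^{-1}$, it suffices to show that this count collapses to one, i.e., that $r_i = 1$ for every $1 \le i \le p_o$; then $\mathbf{D}$, and hence $(\mathbf{I}-\mathbf{D})^{-1}$, is uniquely determined from the data.

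First I would note that $r_i \ge 1$ trivially, because the index $j = i$ always contributes (we have $des_o(V_i) = des_o(V_i)$). The remaining work is to rule out any other index $j \neq i$ in the range $1 \le j \le p_r$ from satisfying $des_o(V_j) = des_o(V_i)$. I would split the argument according to whether the column indexed by $j$ corresponds to an observed or to a latent variable, using the labeling convention under which $V_1, \dots, V_{p_o}$ are observed and $V_{p_o+1}, \dots, V_{p_r}$ are the latent variables whose columns survive in $\mathbf{B''}$.

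For an observed index $1 \le j \le p_o$ with $j \neq i$, Remark \ref{remark:dec} — itself a consequence of Lemma \ref{lemma:obsred} — guarantees that distinct observed variables have distinct observed-descendant sets, so $des_o(V_j) \neq des_o(V_i)$ and $j$ does not contribute to $r_i$. For a latent index $p_o + 1 \le j \le p_r$, Assumption \ref{assu:dec} states precisely that $des_o(V_k) \neq des_o(V_i)$ for every observed $V_i$ and latent $V_k$, so again $j$ does not contribute. Combining the two cases gives $r_i = 1$ for all $i$, whence $\Pi_{i=1}^{p_o} r_i = 1$ and $\mathbf{D}$ is the unique matrix compatible with the data. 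The total causal effect between any two observed variables $V_i$ and $V_j$ is then the uniquely determined entry $[(\mathbf{I}-\mathbf{D})^{-1}]_{j,i}$, completing the argument.

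I expect the only delicate point to be bookkeeping rather than mathematics: one must confirm that every index in the range $1 \le j \le p_r$ falls into exactly one of the two cases, i.e., that the latent columns remaining in $\mathbf{B''}$ are precisely those indexed $p_o+1, \dots, p_r$, and that Assumption \ref{assu:dec} indeed applies to all of them (in particular to those latent variables that were \emph{not} absorbed during the reduction to $\mathbf{B''}$). Since Assumption \ref{assu:dec} is phrased for an arbitrary latent variable, this is immediate, so no genuine obstacle remains and the theorem follows as a corollary of Theorem \ref{th:identifycausal}.
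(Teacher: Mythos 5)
Your proposal is correct and is essentially the paper's own argument: both reduce the theorem to the fact that, under Lemma \ref{lemma:obsred} (equivalently Remark \ref{remark:dec}) for observed--observed pairs and Assumption \ref{assu:dec} for observed--latent pairs, the set $des_o(V_i)$ of each observed variable is distinct from that of every other surviving column, so each column of $\mathbf{\tilde{B}''}$ matches a unique observed variable. The only (immaterial) difference is packaging: you phrase this as $r_i=1$ and invoke the counting formula of Theorem \ref{th:identifycausal}, while the paper argues directly by matching the support sets $I_i$ of the ICA output columns and normalizing.
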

\noindent
{\bf Proof}.
Let matrix $[\mathbf{\tilde{B}}'']_{p_o\times p_r}$ be the output of over-complete ICA problem whose columns are the columns in matrix $\mathbf{B''}$. We define $I_i$ as the the set of indices of nonzero entries of column $\mathbf{\tilde{B}}''_{:,i}$, i.e. $I_i=\{k|[\mathbf{\tilde{B}}''_{:,i}]_{k}\neq 0\}$. We know that $I_i=des_o(V_j)$ if $\mathbf{\tilde{B}}''_{:,i}$ corresponds to the observed variable $V_j$. Moreover, under Assumption \ref{assu:dec}, any observed variable $V_i$ and any variable $V_j$ (observed or latent) have different sets $des_o(V_i)$ and $des_o(V_j)$. Thus, each set $I_i$ is just equal to one of $des_o(V_i)$'s, let say $des_o(V_j)$. The column $\mathbf{\tilde{B}}''_{:,i}$ normalized to $[\mathbf{\tilde{B}}''_{:,i}]_{j}$ shows the total causal effects from variable $j$ to other observed variables. 
\hfill\BlackBox\\

%
%

\begin{algorithm}[!t]
	\caption{}\label{alg:causalstrengths}
	\begin{algorithmic}[1]
		\STATE {\bf Input:} Collection of the sets $des_o(V_i),1\leq i \leq p_o$.
		\STATE Run an over-complete ICA algorithm over observed variables $\mathbf{V_o}$ and obtain matrix $\mathbf{\tilde{B}''}$.
		\FOR{$i=1:p_r$}
		\STATE $I_i=\{k|[\mathbf{\tilde{B}''}_{:,i}]_k\neq 0\}$
		\FOR{$j=1:p_o$}
		\IF{$I_i=des_o(V_j)$}
		\STATE $[\mathbf{\hat{B}_o}]_{:,j}=\mathbf{\tilde{B}''}_{:,i}/[\mathbf{\tilde{B}''}_{:,i}]_j$
		\ENDIF
		\ENDFOR
		\ENDFOR
		\STATE {\bf Output:} $\mathbf{\hat{B}_o}$
	\end{algorithmic} 
\end{algorithm}
The description of the proposed solution in Theorem \ref{theorem:causal_strength} is given in Algorithm \ref{alg:causalstrengths}. It is noteworthy to mention that the example in Section \ref{sec:example} (given in Figure \ref{fig:example}) violates the conditions in Theorem \ref{theorem:causal_strength} since we have $des_o(V_k)=des_o(V_i)$. We have shown for this example that the causal effect from $V_i$ to $V_j$ cannot be identified uniquely.

\section{Experiments}
In this section, we first evaluate the performance of the proposed method in recovering causal orders from synthetic data, generated according to the causal graph in Figure \ref{fig:exampleintro}. Our experiments show that the proposed method returns a correct causal order while, as we have discussed in Introduction section, the previous methods \citep{entner2010discovering,shimizu2011directlingam} cannot identify the causal order. We also consider another causal graph which satisfies Assumption \ref{assu:dec} and demonstrate that the proposed method can return the correct causal effects. Next, we evaluate the performance of the proposed method for different number of variables in the system.
Afterwards, for real data, we consider the daily closing prices of four world stock indicies and check the existence of causal paths between any two indicies. The results are compatible with common beliefs in economy.   

\subsection{Synthetic data}
First, for the causal graph in Figure \ref{fig:exampleintro}, we generated 1000 samples of observed variables $V_1$ and $V_2$ where nonzero entries of matrix $\mathbf{A}$ is equal to $0.9$. We utilized Reconstruction ICA (RICA) algorithm \citep{le2011ica} to solve the over-complete ICA problem as follows: Let $\mathbf{v_o}$ be a $p_o\times n$ matrix containing observational data where $[v_o]_{i,j}$ is $j$-th sample of variable $V_i$ and $n$ is the number of samples. First, the sample covariance matrix of $\mathbf{v_o}$ is eigen-decomposed, i.e., $1/(n-1)(\mathbf{v_o}-\mathbf{\bar{v}_o})(\mathbf{v_o}-\mathbf{\bar{v}_o})^T=\mathbf{U}\mathbf{\Sigma}\mathbf{U}^T$ where $\mathbf{U}$ is the orthogonal matrix, $\mathbf{\Sigma}$ is a diagonal matrix, and $\mathbf{\bar{v}_o}$ is the sample mean vector. Then, the observed data is pre-whitened as follows: $\mathbf{w}=\mathbf{\Sigma}^{-1/2}\mathbf{U}(\mathbf{v_o}-\mathbf{\bar{v}_o})$. The RICA algorithm tries to find matrix $\mathbf{Z}$ that is the minimizer of the following objective function:
\begin{equation*}
\underset{{\mathbf{Z}}}{\mbox{minimize }} \sum_{i=1}^n\sum_{j=1}^{p_r} g(\mathbf{Z}_{:,j}^T\mathbf{w}_{:,i})+ \frac{\lambda}{n} \sum_{i=1}^n \|\mathbf{Z}\mathbf{Z}^T \mathbf{w}_{:,i}- \mathbf{w}_{:,i}\|_2^2,
\end{equation*}
where parameter $\lambda$ controls the cost of penalty term. We estimated the matrix $\mathbf{\tilde{B}''}$ by $\mathbf{U}\mathbf{\Sigma}^{1/2}\mathbf{Z}^*$ where $\mathbf{Z}^*$ is the optimal solution of the above optimization problem.


In order to estimate the number of columns of $\mathbf{\tilde{B}''}$, we held out 250 of samples for model selection. More specifically, we solved the over-complete ICA problem for different number of columns, evaluated the fitness of each model by computing the objective function of RICA over the hold-out set, and selected the model with minimum cost. In order to check whether an entry is equal to zero, we used the bootstrapping method \citep{efron1994introduction}, which generates $10$ bootstrap samples by sampling with replacement from training data. For each bootstrap sample, we executed RICA algorithm to obtain an estimation of $\mathbf{\tilde{B}''}$. Since in each estimation, columns are in arbitrary permutation, we need to match similar columns in estimations of $\mathbf{\tilde{B}''}$. To do so, in each estimation, we divided all entries of a column by the entry with the maximum absolute value in that column. Then, we picked each column from the estimated mixing matrix, computed its $l_2$ distance from each column of another estimated mixing matrix, and matched to the one with a minimum distance. Afterwards, we used t-test with confidence level of $95\%$ to check whether an entry is equal to zero from the bootstrap samples. An estimation of $\mathbf{\tilde{B}''}$ from a bootstrap sample is given as follows:

\begin{equation}
\nonumber\begin{bmatrix}
-0.0272       &  0.5238 & 1  \\
1    & 1 & 0.8579  
\end{bmatrix}.
\end{equation}

Moreover, experimental results showed the correct support of $\mathbf{\tilde{B}''}$, i.e., \linebreak $[0,1,1;1,1,1]$ can be recovered with merely $10$ bootstrap samples. Thus, there is a causal path from $V_1$ to $V_2$. Furthermore, for the causal graph $V_1\leftarrow V_3 \rightarrow V_2$ in which $V_3$ is only the latent variable, we repeated the same procedure explained above. An estimation of $\mathbf{\tilde{B}''}$ from one of the bootstrap samples is given as follows:

\begin{equation}
\nonumber\begin{bmatrix}
1       &  -0.046 & 0.9838  \\
-0.031    & 1 &  1  
\end{bmatrix}.
\end{equation}

\begin{figure}[t]
	\centering
	\includegraphics[width=0.3\textwidth]{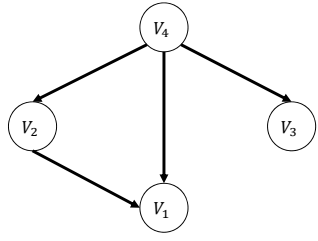}
	\caption{An example of casual graphs satisfying structural conditions. }\label{fig:examplestruc}
\end{figure} 

From experiments, the estimated support of $\mathbf{\tilde{B}''}$ from bootstrap samples would be: $[0,1,1;1,0,1]$. Thus, we can conclude that there is no causal path between $V_1$ and $V_2$.
Next, we considered the causal graph in Figure \ref{fig:examplestruc} where $V_4$ is the only latent variable. The direct causal effects of all directed edges are equal to $0.9$.  An estimation of $\mathbf{\tilde{B}''}$ from one of the bootstrap samples is given as follows:

\begin{equation}
\nonumber\begin{bmatrix}
-0.049      & 0.892 & 1  & 1  \\
-0.024  & 1 & 0.523 & -0.042\\
1  &-0.02 & 0.527 & -0.032
\end{bmatrix}.
\end{equation}

Thus, we can imply that there is only a causal path from $V_2$ to $V_1$. We can also estimate total causal effects between observed variables since this causal graph satisfies Assumption \ref{assu:dec}. The output of Algorithm \ref{alg:causalstrengths} would be:

\begin{equation}
\nonumber\begin{bmatrix}
1 & 0.892 &-0.049  \\
-0.042 & 1 & -0.024 \\
-0.032 & -0.02 & 1
\end{bmatrix}.
\end{equation}
which is close to the true causal effects.

\begin{figure}[t]
	\centering
	\includegraphics[width=0.6\textwidth]{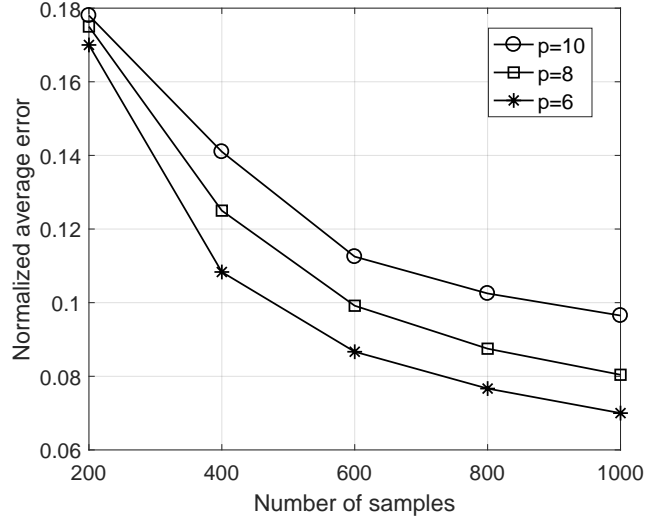}
	\caption{Average normalized error versus number of samples.}\label{fig:4}
\end{figure}

We generated 1000 DAGs of size $p=6,8,10$ by first selecting a causal order among variables randomly and then connecting each pair of variables with probability $0.3$. We generated data from a linear SEM where nonzero entries of matrix $\mathbf{A}$ is equal to $0.9$, and the exogenous noises have a uniform distribution. In each generated DAG, we selected $p_l=p/2$ variables randomly as latent variables. We checked whether there is a causal path between any two observed variables by a similar procedure described for the previous examples. We define normalized error as the number of pairs such as $(V_i,V_j)$ that there exists a causal path from $V_i$ to $V_j$ in the true causal graph but we output that there is no causal path between them (or vice versa) to the total pairs, i.e., $p(p-1)$. In Figure \ref{fig:4}, the average normalized error of the result given by our approach is depicted versus the number of samples. As can be seen, the average normalized error is fairly low for large enough samples. Furthermore,  we have better performance for the cases with smaller number of variables in the system.

\subsection{Real data}
We considered the daily closing prices of the following world stock indicies from $10/12/2012$ to $10/12/2018$, obtained from Yahoo financial database: Dow Jones Industrial Average (DJI) in USA, Nikkei 225 (N225) in Japan, Euronext 100 (N100) in Europe, 
Hang Seng Index (HSI) in Hong Kong, and the Shanghai Stock Exchange Composite Index (SSEC)
in China.

\begin{figure}
	\centering
	\includegraphics[width=0.35\textwidth]{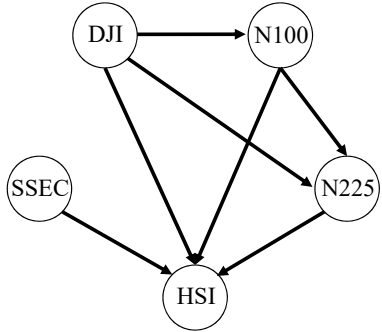}
	\caption{The causal relationships among five world stock indicies obtained from the proposed method in Section \ref{section:path}.}\label{fig:stock}
\end{figure}

Let $c_i(t)$ be the closing price of $i$-th index on day $t$. We define the corresponding return by $R_i(t):=(c_i(t)-c_{i-1}(t))/c_{i-1}(t)$. We considered the returns of indicies as an observational data and applied the proposed method in Section \ref{section:path} in order to check the existence of a causal path between any two indicies. Figure \ref{fig:stock} depicts the causal relationships among the indicies. In this figure, there is a directed edge from index $i$ to index $j$ if we find a causal path from $i$ to $j$. As can be seen, there are causal paths from DJI to HSI, N225, and N100 which is commonly known to be true in the stock market \citep{hyvarinen2010estimation}. Furthermore, HSI is influenced by all other indicies and SSEC only affects HSI which these findings are compatible with the previous results in \citep{hyvarinen2010estimation}. 

\section{Conclusions}
We considered the problem of learning causal models from observational data in linear non-Gaussian acyclic models with latent variables. 
Under the faithfulness assumption, we proposed a method to check whether there exists a causal path between any two observed variables. Moreover, we gave necessary and sufficient graphical conditions to uniquely identify the number of variables in the system. From the information about existence of a directed path, we  obtain a causal order among the observed variables. Next, we considered the problem of estimating total causal effects.
We showed by an example that causal effects among observed variables cannot be identified uniquely even under the faithfulness assumption and non-Gaussianity of exogenous noises. However, we can identify all possible set of total causal effects that are compatible with the observational data efficiently in time. Furthermore, we presented structural conditions under which we can learn total causal effects among observed variables uniquely. Experiments on synthetic data and real-world data showed the effectiveness of our proposed algorithms on learning causal models. 







\appendix

\section*{Appendix A. Proof of Theorem \ref{lemma:absorb}}
``if" part:\\
We say a directed path is latent if all the variables on the path except the endpoint are latent. The ``if" parts of conditions in Theorem \ref{lemma:absorb} can be rewritten as follows:\\
(a) Latent variable $V_{p_o+j}$, $1\leq j\leq p_l$, is absorbable in $\emptyset$ if it has no observable descendant.\\
(b1) Latent variable $V_{p_o+j}$, $1\leq j\leq p_l$, is absorbable in observed variable $V_i$, $1\leq i\leq p_o$, if $V_i$ is the only observed variable influenced by $V_{p_o+j}$ through some latent paths.\\
(b2) Latent variable $V_{p_o+j}$, $1\leq j\leq p_l$, is absorbable in latent variable $V_{p_o+k}$, $1\leq k\leq p_l$, if all latent paths from $V_{p_o+j}$ to observed variables go through $V_{p_o+k}$.
\\ It is easy to show that conditions (b1) and (b2) are equivalent to ``if" part of condition (b) in Theorem \ref{lemma:absorb}.
From \eqref{eq:mainp}, we know that $\mathbf{V_o}= (\mathbf{I}-\mathbf{D})^{-1}[\mathbf{I}|\mathbf{A_{ol}}(\mathbf{I}-\mathbf{A_{ll}})^{-1}]\mathbf{N}$ where entry $(i,j)$ of matrix $ (\mathbf{I}-\mathbf{D})^{-1}\mathbf{A_{ol}}(\mathbf{I}-\mathbf{A_{ll}})^{-1}$ is the total causal effect of latent variable $V_{p_o+j}$ to the observed variable $V_i$. This entry would be zero if no directed path exists from latent variable $V_{p_o+j}$ to observed variable $V_i$. Now, we prove the correctness of above conditions:\\
(a) If a latent variable $V_{p_o+j}$ has no observable descendant, then the $j$-th column of  $\mathbf{A_{ol}}(\mathbf{I}-\mathbf{A_{ll}})^{-1}$ is all zeros. Hence, there would be no changes in $[\mathbf{I}|\mathbf{A_{ol}}(\mathbf{I}-\mathbf{A_{ll}})^{-1}]\mathbf{N}$ by setting $N_{p_o+j}$ to zero. Therefore, there would be no change in $P_{V_o}$.\\
(b1) Since latent variable $V_{p_o+j}$ only influences one observed variable through latent paths, $[\mathbf{A_{ol}}(\mathbf{I}-\mathbf{A_{ll}})^{-1}]_{:,j}$ has only one non-zero entry and therefore linearly dependent on one of columns of identity matrix, let say $i$-th column. Moreover, the total causal effect from $V_{p_o+j}$ to $V_i$, i.e., $[\mathbf{B}]_{i,p_o+j}$ is equal to $[\mathbf{A_{ol}}(\mathbf{I}-\mathbf{A_{ll}})^{-1}]_{i,j}$ since there is no causal path from $V_{p_o+j}$ to $V_i$ that goes through an observed variable other than $V_i$. Thus, we replace $N_i$ by $N_i+[\mathbf{A_{ol}}(\mathbf{I}-\mathbf{A_{ll}})^{-1}]_{i,j} N_{p_o+j}$ and set $N_{p_o+j}$ to zero and there would be no change in $[\mathbf{I}|\mathbf{A_{ol}}(\mathbf{I}-\mathbf{A_{ll}})^{-1}]\mathbf{N}$.\\
(b2)  Consider any observed variable $V_i$, $1\leq i\leq p_o$. If all latent paths of $V_{p_o+j}$ go though $V_{p_o+k}$, then $[\mathbf{A_{ol}}(\mathbf{I}-\mathbf{A_{ll}})^{-1}]_{i,j}=[\mathbf{A_{ol}}(\mathbf{I}-\mathbf{A_{ll}})^{-1}]_{i,k}[\mathbf{B}]_{p_o+k,p_o+j}$ since all the paths from $V_{p_o+j}$ to $V_{p_o+k}$ are latent. Thus, we can change $N_{p_o+k}$ to $N_{p_o+k}+ [\mathbf{B}]_{p_o+k,p_o+j}N_{p_o+j}$ and set $N_{p_o+j}$ to zero and there would be no change in $[\mathbf{I}|\mathbf{A_{ol}}(\mathbf{I}-\mathbf{A_{ll}})^{-1}]\mathbf{N}$.
\\
``only if'' part:\\
Now, we probe that the conditions (a), (b1), and (b2) are the only absorbable case. It can be easily shown that an observed variable cannot be absorbed into any other observed or latent variables. Thus, it is just needed to consider the following cases:
\begin{itemize}
	\item Absorbing a latent variable in an observed variable: Suppose that a latent variable $V_{j}$ can be absorbed in an observed variable $V_{i}$. Furthermore, assume that $V_{j}$ also influences other observed variable $V_k$ through latent path(s). That is, there exist some paths that start from $V_{j}$ and end in $V_{k}$ without traversing, $V_i$. Let $\gamma\neq 0$ be the causal strength of such paths. Then, $[\mathbf{B}]_{k,j}=[\mathbf{B}]_{k,i}\times [\mathbf{B}]_{i,j} +\gamma$. To absorb $V_{j}$ in $V_i$, $\gamma$ should be zero which would contradict  the faithfulness assumption.
	\item Absorbing a latent variable in another latent variable: Suppose that a latent variable $V_{j}$ can be absorbed in another latent variable $V_i$ but for some observed variable $V_k$, all latent paths from $V_j$ do not go through $V_i$. Let $\gamma$ be the causal strength of such paths. Then, $[\mathbf{B}]_{k,j}= [\mathbf{B}]_{k,i}\times [\mathbf{B}]_{i,j}+\gamma$. To absorb $V_{j}$ in $V_i$, $\gamma$ should be zero which contradicts the faithfulness assumption.
\end{itemize}

\section*{Appendix B. Proof of Lemma \ref{lemma:minimal}}
\label{Appendix:B}
Suppose that a latent variable $V_i$ has at least two non-absorbable children such as $V_j$ and $V_k$. We need to consider three cases:  

\begin{itemize}
	\item If both of $V_j$ and $V_k$ are observed variables, then $V_i$ is not absorbable according to Theorem \ref{lemma:absorb}.  
	\item Suppose that $V_j$ and $V_k$ are latent variables. Each of them must reach at least two observed variables through latent paths (due to condition (b) in Theorem \ref{lemma:absorb}). Thus, $V_i$ also reaches those observed variables through latent paths. Furthermore, all of latent paths starting from $V_i$ does not go through only one latent variables. Hence, none of the conditions in Theorem \ref{lemma:absorb} are not satisfied and $V_i$ is not absorbable.
	\item
	One of $V_j$ or $V_k$, let say variable $V_j$, is observed. $V_k$ must reach an observed variable other than $V_j$ through some latent paths. Otherwise, it is absorbable. Therefore, $V_i$ is not absorbable since it does not satisfy any conditions in Theorem \ref{lemma:absorb}. 
\end{itemize}

\section*{Appendix C. Proof of Theorem \ref{cond:eq}}
 If $G$ is not minimal, then it can be easily seen that $\mathbf{B'}$ is also reducible. Now, suppose that $G$ is minimal. We want to show that $\mathbf{B'}$ is also not reducible almost surely. By contradiction, suppose that $\mathbf{B'}$ is reducible. Then two columns of $[\mathbf{I}|\mathbf{A_{ol}}(\mathbf{I}-\mathbf{A_{ll}})^{-1}]$ must be linearly dependent. Now, two cases should be considered: 
 \begin{itemize}
 	\item   One column of $\mathbf{A_{ol}}(\mathbf{I}-\mathbf{A_{ll}})^{-1}$, let say $i$-th column, and one column of $\mathbf{I}$ are linearly dependent. Hence, all the latent paths starting from latent variable $V_{p_o+i}$ influences only one observed variable (Condition (b) in Theorem \ref{lemma:absorb}). Thus, $G$ is not minimal which is a contradiction.
 	\item Two columns of $\mathbf{A_{ol}}(\mathbf{I}-\mathbf{A_{ll}})^{-1}$, let say $i,j$ are linearly dependent. If the corresponding columns have only one non-zero entry, then both of them can be absorbed in an observed variable (Condition (b) in Theorem \ref{lemma:absorb}). Thus, $G$ is not minimal. Now, suppose that these columns have more than one nonzero entry each, let say entries $k$ and $l$. Without loss of generality, suppose that $V_{p_o+i}$ is the ancestor of $V_{p_o+j}$. Let $h_i$ be the maximum length of latent paths starting from latent variable $V_{p_o+i}$. By induction on $h_i$, we will show that $i,j$-th columns of $\mathbf{A_{ol}}(\mathbf{I}-\mathbf{A_{ll}})^{-1}$ are linearly dependent with measure zero. The case of $h_i=1$ is trivial. Suppose that for $h_i=r$, the statement holds true. We will prove it for $h_i=r+1$. Let latent variable $V_{p_o+u}$ be a child of $V_{p_o+i}$ and assume some paths from $V_{p_o+u}$ do not go through $V_{p_o+j}$. We know that: 
 	\begin{equation}
 	[\mathbf{B}]_{k,p_o+j}/[\mathbf{B}]_{l,p_o+j}=[\mathbf{B}]_{k,p_o+i}/[\mathbf{B}]_{l,p_o+i}.
 	\label{eq:p1}
 	\end{equation} 
 	Furthermore, 
 	\begin{equation}
 	[\mathbf{B}]_{k,p_o+i}=[\mathbf{B}]_{k,p_o+u}[\mathbf{B}]_{p_o+u,p_o+i}+c',[\mathbf{B}]_{l,p_o+i}=[\mathbf{B}]_{l,p_o+u}[\mathbf{B}]_{p_o+u,p_o+i}+c'',
 	\label{eq:p2}
 	\end{equation}
 	for some values $c',c''$. Moreover, $[\mathbf{B}]_{p_o+u,p_o+i}= [\mathbf{A}]_{p_o+u,p_o+i}+c'''$ for some $c'''$. Plugging \eqref{eq:p2} into \eqref{eq:p1}, we have:
 	\begin{align*}
 	([\mathbf{B}]_{k,p_o+u}[\mathbf{B}]_{l,p_o+j}&-[\mathbf{B}]_{k,p_o+j}[\mathbf{B}]_{l,p_o+u})[\mathbf{A}]_{p_o+u,p_o+i}=\\ &[\mathbf{B}]_{l,p_o+j}c'-[\mathbf{B}]_{k,p_o+j}c''- ([\mathbf{B}]_{k,p_o+u}[\mathbf{B}]_{l,p_o+j}-[\mathbf{B}]_{k,p_o+j}[\mathbf{B}]_{l,p_o+u})c'''.
 	\end{align*}
 	The above equation holds with measure zero if $[\mathbf{B}]_{k,p_o+u}[\mathbf{B}]_{l,p_o+j}-[\mathbf{B}]_{k,p_o+j}[\mathbf{B}]_{l,p_o+u}\neq0$ which is true with measure one from the induction hypothesis.
 \end{itemize}

\section*{Appendix D. An Example of non-Identifiability of Total Causal Effects}
\label{Appendix:A}
Let $P=(V_{i_0}, V_{i_1}, \cdots,  V_{i_{r-1}}, V_{i_r})$ be a causal path of length $r$ from variable $V_{i_0}$ to variable $V_{i_r}$. We define the weight of path $P$, denoted by $\omega_p$ , as the product of direct causal strengths of edges on the path:
\begin{equation}
\omega_P=\prod_{s=0}^{r-1} [\mathbf{A}]_{i_{s+1},i_{s}}.
\end{equation}

Suppose that $\Pi_{V_i,V_j}$ be the set of all causal paths from variable $V_i$ to variable $V_j$. It can be shown that the total causal effect from $V_i$ to $V_j$ can be computed by the following equation:
\begin{equation}
[\mathbf{B}]_{j,i} = \sum_{P\in \Pi_{V_i,V_j} }\omega_P.
\end{equation}

Now, consider a causal graph in Figure \ref{fig:example} where $V_i$ and $V_j$ are observed variables and $V_k$ is latent variable. There exist causal paths from $V_k$ to $V_i$ and $V_j$, and from $V_i$ to $V_j$ with the following properties:
\begin{itemize}
	\item Let $\Pi'_{V_k,V_j}$ be the causal paths from variable $V_k$ to variable $V_j$ where $V_i$ is not on any of these paths. We assume that $\Pi'_{V_k,V_j}\neq \emptyset$.
	\item All intermediate variables in $\Pi_{V_k,V_i}$, $\Pi'_{V_k,V_j}$ and $\Pi_{V_i,V_j}$ are latent.
\end{itemize}

We can write $V_i$ and $V_j$ based on the exogenous noises of their ancestors as follows:
\begin{align}
\begin{split}
V_i &= \alpha N_k+ \sum_{V_r\in anc(V_i)\backslash V_k} [\mathbf{B}]_{i,r} N_r,\\
V_j &= \beta N_i +\gamma N_k + \sum_{V_r\in anc(V_j)\backslash \{V_k,V_i\}} [\mathbf{B}]_{i,r} N_r,
\label{eq:example}
\end{split}
\end{align}
where $\alpha=\sum_{P\in \Pi_{V_k,V_i}} \omega_P$, $\beta=\sum_{P\in \Pi_{V_i,V_j}} \omega_P$, and $\gamma=\sum_{P\in \Pi'_{V_k,V_j}} \omega_P$.

Now, we construct a causal graph depicted in Figure \ref{fig:example} where the exogenous noises of variables $V_i$ and $V_k$ are changed to $\alpha N_k$ and $N_i$, respectively. Furthermore, we pick three paths $P_1\in \Pi_{V_k,V_i}$, $P_2\in \Pi'_{V_k,V_j}$, $P_3\in \Pi_{V_i,V_j}$ where:
\begin{align}
\nonumber P_1&=(V_k, V_{u_1},\cdots, V_i),\\
\nonumber P_2&=(V_k, V_{u_2},\cdots, V_j),\\
\nonumber P_3&=(V_i, V_{u_3},\cdots, V_j).
\end{align}

By our first property on the paths, we can find two paths $P_1$ and $P_2$ such that $V_{u_1}\neq V_{u_2}$. We also change the matrix $\mathbf{A}$ to matrix $\mathbf{A}'$ where all the entries of $\mathbf{A}'$ are the same as $\mathbf{A}$ except three entries $[\mathbf{A}']_{u_1,k}$, $[\mathbf{A}']_{u_2,k}$, and 
$[\mathbf{A}']_{u_3,i}$. We will adjust these three entries such that the total causal effects from $V_k$ to $V_i$, from $V_k$ to $V_j$, and from $V_i$ to $V_j$ become 1, $-\gamma/\alpha$, and $\beta +\gamma/\alpha$, respectively. Moreover, these adjustments should not change the dependencies of observed variables $V_i$ and $V_j$ to the exogenous noises of their ancestors given in Equation \eqref{eq:example}. It can be shown that we can change the three mentioned causal effects to our desired values by the following adjustments:
\begin{align*}
[\mathbf{A}']_{u_1,k}=\frac{1-\sum_{P\in \Pi_{V_k,V_i}\backslash\{P_1\}} \omega_{P}}{\omega_{P_1}/[\mathbf{A}]_{u_2,k}},\\
[\mathbf{A}']_{u_2,k}=\frac{-\gamma/\alpha-\sum_{P\in \Pi'_{V_k,V_j}\backslash\{P_2\}} \omega_P}{\omega_{P_2}/[\mathbf{A}]_{u_2,k}},\\
[\mathbf{A}']_{u_3,i}=\frac{\beta+\gamma/\alpha-\sum_{P\in \Pi_{V_i,V_j}\backslash\{P_3\}} \omega_P}{\omega_{P_3}/[\mathbf{A}]_{u_3,i}}.
\end{align*} 

Now, consider any latent variable $V_u$ which is on one of the paths in $\Pi_{V_k,V_i}$, $\Pi'_{V_k,V_j}$, or $\Pi_{V_i,V_j}$. Changes in those mentioned three edges cannot affect the total causal effect from $V_u$ to $V_i$ or $V_j$ since the edges $(V_k,V_{u_1})$, $(V_k,V_{u_2})$, and $(V_i,V_{u_3})$ are not a part of any paths from $V_u$ to $V_i$ or $V_j$. Thus, equations in \eqref{eq:example} do not change while the total causal effect from $V_i$ to $V_j$ becomes $\beta+\gamma/\alpha$ in the second causal graph. It is noteworthy to mention that changes in the equations of latent variables are not important since we are not observing these variables.

\vskip 0.2in
\bibliography{ref}

\end{document}